\definecolor{blue}{rgb}{0,0,0.7}
\newcommand{\I}[1]{\mathbb{I} \left( #1 \right)}
\newcommand{\E}[1]{\mathbb{E} \left( #1 \right)}
\newcommand{\x}{\mathbf{x}}
\newcommand{\parens}[1]{\left(#1\right)}
\newcommand{\condpr}[2]{\mathbb{P}\left(#1 | #2\right)} % conditional prob
\newcommand{\condprobyx}{\mathbb{P}\left(y=1 | x\right)} % conditional prob
\newcommand{\pr}[1]{\mathbb{P} \left( #1\right)} 
\newcommand{\prsubs}[2]{\mathbb{P}_{#1} \left( #2\right)} % prob
\newcommand{\condprhat}[2]{\widehat{\mathbb{P}}\left(#1 | #2\right)} % conditional prob
\newcommand{\prhat}[1]{\widehat{\mathbb{P}} \left( #1\right)} % prob
\newcommand{\mtry}{\textit{mtry}}
\newcommand{\Strong}{\mathcal{S}}
\newcommand{\Weak}{\mathcal{W}}
\newcommand{\R}[2]{\mathcal{R}_{#1}\left( #2 \right)}
\newcommand{\N}[2]{\text{N}_{#1}\left(#2 \right)}
\newcommand{\rf}[2]{\text{RF}^{\text{#2}}\left( #1 \right)}
\begin{document}

\title{Making Sense of Random Forest Probabilities: a Kernel Perspective}

\author{\name Matthew Olson \email maolson@wharton.upenn.edu\\
\name Abraham J. Wyner \email  ajw@wharton.upenn.edu\\
       \addr Department of Statistics\\
       Wharton School, University of Pennsylvania\\
       Philadelphia, PA 19104, USA
	}
\editor{}

\maketitle

\begin{abstract}%   <- trailing '%' for backward compatibility of .sty file
A random forest is a popular tool for estimating probabilities in machine learning classification tasks.  However, the means by which this is accomplished is unprincipled: one simply counts the fraction of trees in a forest that vote for a certain class.  In this paper, we forge a connection between random forests and kernel regression.  This places random forest probability estimation on more sound statistical footing.  As part of our investigation, we develop a model for the proximity kernel and relate it to the geometry and sparsity of the estimation problem.  We also provide intuition and recommendations for tuning a random forest to improve its probability estimates.
\end{abstract}

\begin{keywords}
Random forest, probability estimation, kernel regression, machine learning
\end{keywords}
 
% \newpage

\section{Introduction}
\label{sec:intro}

In classification tasks, one is often interested in estimating the probability that an observation falls into a given class - the conditional class probability.   These probabilities have numerous applications, including ranking, expected utility calculations, and classification with unequal costs.  The standard approach to probability estimation in many areas of science relies on logistic regression.  However, modern data sets with nonlinear or high dimensional structure, it is usually impossible to guarantee a logistic model is well-specified.  In that case, resulting probability estimates may fail to be consistent \citep{kruppa2014}.  As a result, researchers have been relying more on machine learning and other nonparametric approaches to classification that make leaner assumptions.

Random forests have become a widely used tool in ``black-box" probability estimation.  This technique has been found to be successful in diverse areas such as medicine \citep{gurm2014}, \citep{escobar2015}, ecology \citep{evans2011}, outcome forecasting in sports \citep{lock2014}, and propensity score calculations in observational studies \citep{zhao2016}, \citep{lee2010}.  First proposed in \cite{breiman2001}, a random forest  consists of a collection of randomly grown decision trees whose final prediction is an aggregation of the predictions from individual trees.  Random forests enjoy a number of properties that make them suitable in practice, such as trivially parallelizable implementations, adaptation to sparsity, and automatic variable selection, to name a few.  The reader is well-advised to consult \cite{biau2016} for an excellent review of recent research in this area.

After fitting a classification random forest to training data, it is common practice to infer conditional class probabilities for a test point by simply counting the fraction of trees in the forest that vote for a certain class.  A priori, this is an unprincipled practice: the fraction of votes of classifiers in an ensemble need not have anything to do with a class probability.  In the case when base classifiers in the ensemble are highly correlated - such as a collection of bagged tree stumps - the estimated probabilities will necessary converge to 0 or 1.

The next section contains a simulated example for which a random forest produces catastrophically poor probability estimates, yet still manages to obtain the Bayes error rate.  These observations should not be surprising.  Achieving a low misclassification error rate requires only that the classifier estimates one quantile well: the median.  As long as 251 out of 500 trees vote for the correct class, the forest will achieve a low test error rate.  Probability estimation at every quantile simultaneously is obviously a much harder problem.  What is surprising is that despite such ad hoc foundations, random forests do tend to produce good probability estimates in practice, perhaps after calibration \citep{niculescu2005}.  The goal of this paper is to put random forest probability estimates on more sound statistical footing in order to understand how a voted ensemble is able to estimate probabilities.  Moreover, we will exploit this understanding to improve the quality of these estimates in the cases where they are poor.

\subsection{Motivation}
\label{subsec:motivation}

Random forests tend to be excellent classifiers under a wide range of parameter settings \citep{berk2008}.  The same robustness, however, is not enjoyed by a forest's probability estimates.  In fact, it can sometimes be the case that such classifiers achieve the Bayes error rate while producing remarkably bad probability estimates.  The following example motivates our analysis of random forest probabilities in the rest of the paper.

We consider a very simple   model.  First, draw $n=1000$ predictors $x \in [-1,1]^{50}$ uniformly at random, and then generate class labels $y \in \{0,1\}$ according to the conditional class probability 
\begin{equation*}
\condpr{y=1}{x} =
\begin{cases}
0.3 \hspace{3mm} \text{if } x_1 < 0\\
0.7 \hspace{3mm} \text{if }x_1 \geq 0.  \\
\end{cases}
\end{equation*}
The first dimension contains all of the signal, while the remaining 49 dimensions are noise.  Note here that a simple tree stump would produce very good probability estimates.  Fit to training data, the stump would split near $x_1 = 0$, and the training data that accumulated in each daughter node would have relative proportions of $y=1$ labels in the way we would expect.  The story is quite different for a random forest.

Figure~\ref{fig:2d_hist} plots a histogram of estimated probabilities from a classification random forest under two different parameter settings.  The parameter that will concern us most in this paper is $\mtry$, the number of randomly chosen candidate predictors for each tree node - the details are spelled out in Section~\ref{sec:background}.  Since the population conditional class probability function only takes on two possible values, namely $0.3$ and $0.7$, we would ideally expect these histograms to consist of two point masses at these values.  Figure~\ref{subfig:2d_hist_mtry1} shows the results from using a random forest with $\mtry=1$, while Figure~\ref{subfig:2d_hist_mtry30} shows the estimated probabilities when $\mtry=30$ \footnote{Note that when $\mtry=1$, one randomly chosen predictor is considered at each split, so the trees in the forest are very weak.}.  When comparing these figures, the quality of probability estimates differs drastically.  When $\mtry=1$, the probabilities are pushed toward the uninformative value of $0.5$, while when $\mtry=30$, the probabilities are centered around their true values of $0.3$ and $0.7$.  However, in both cases, the random forest achieves the Bayes error rate of $0.3$!  Each random forest is able to achieve similar (optimal) classification performance in terms of test error, but very different performance in probability space.  A random forest can produce good probability estimates, but only when tuned properly.

\begin{figure}[htp]
\centering
    \begin{subfigure}[b]{0.3\textwidth}
        \includegraphics[width=\textwidth]{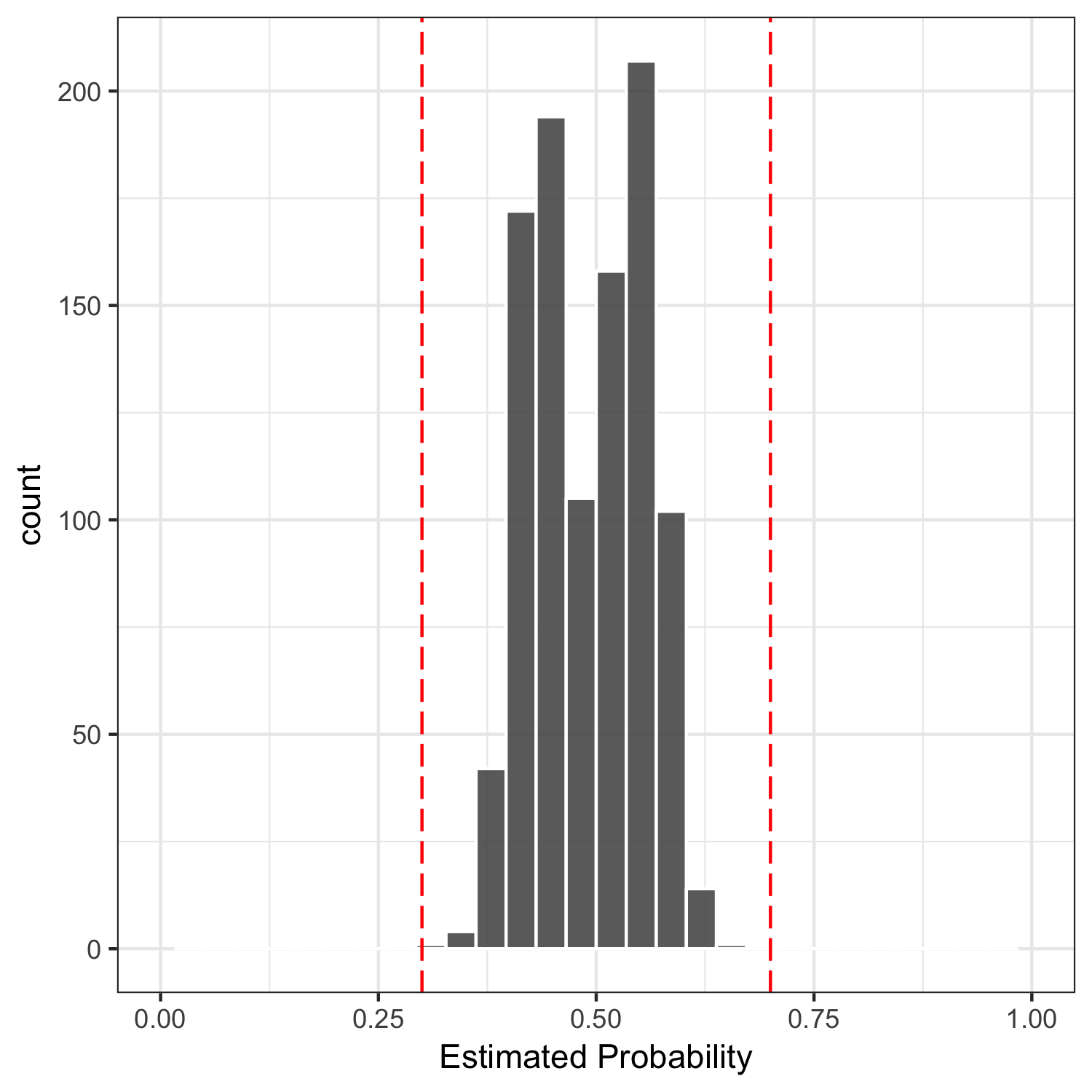}
        \caption{$\mtry=1$}
  \label{subfig:2d_hist_mtry1}
     \end{subfigure}
     \quad
     \begin{subfigure}[b]{0.3\textwidth}
        \includegraphics[width=\textwidth]{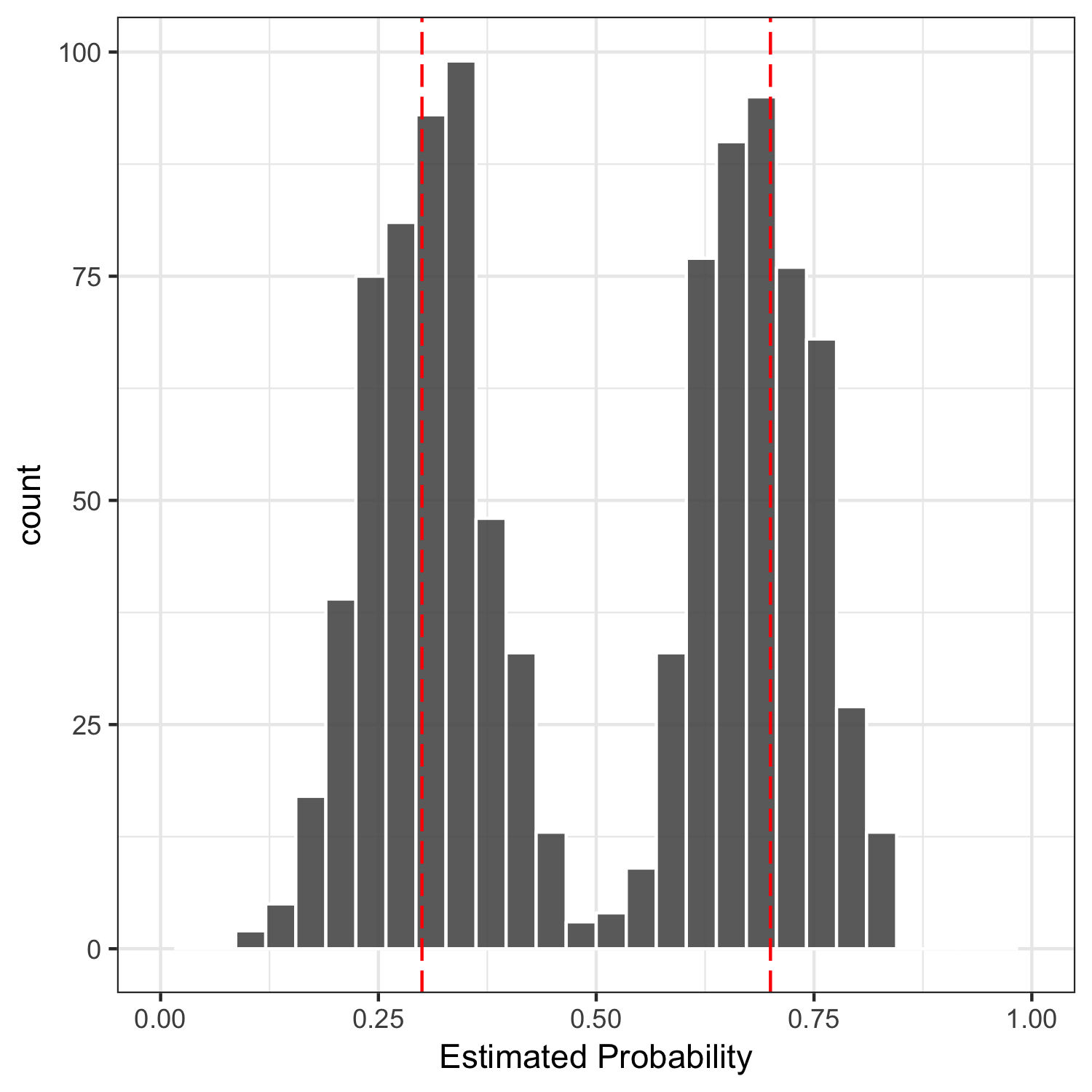}
        \caption{$\mtry=30$}
  \label{subfig:2d_hist_mtry30}
     \end{subfigure}
     \caption{Histogram of estimated probabilities produced by a random forest under two different settings of $\mtry$.  We expect these probabilities to cluster around $0.3$ and $0.7$.}
          \label{fig:2d_hist}
\end{figure}

In order to determine why this discrepancy in probability estimation quality differs, we will focus our efforts on the estimation at a single point $x_0 = (0.5, 0.5, \ldots, 0.5)$.  Specifically, we will be concerned with the points in the training set that get used to make a prediction at $x_0$, which is the same to say, the training points that appear in the same terminal nodes of the forest trees as $x_0$.  Such points are referred to in the literature as \textit{voting points}, and were first studied by \cite{lin2006}.  Figure~\ref{fig:2d_points} displays the voting points for the point $x_0$ projected in the $(x_1, x_2)$ plane.  Ideally, voting points should all lie in the half-space $x_1 \geq 0$, since these points all have the same conditional class probability as our target point $x_0$, and we would hope that a random forest would only consider ``similar" points in making a prediction.  When comparing Figures~\ref{subfig:2d_points_mtry1} and \ref{subfig:2d_points_mtry30}, it is clear that the forest with $\mtry = 30$ concentrates all of its voting points in the correct neighborhood, while the forest with $\mtry=1$ does not.  This example illustrates that the parameter $\mtry$ intuitively controls the ``tightness" of voting neighborhoods: when $\mtry$ is large, these neighborhoods concentrate more tightly.  We would like the reader to see the analogy to the bandwidth parameter in a kernel regression.  This analogy is at the heart of the paper.

\begin{figure}[htp]
\centering
    \begin{subfigure}[b]{0.3\textwidth}
        \includegraphics[width=\textwidth]{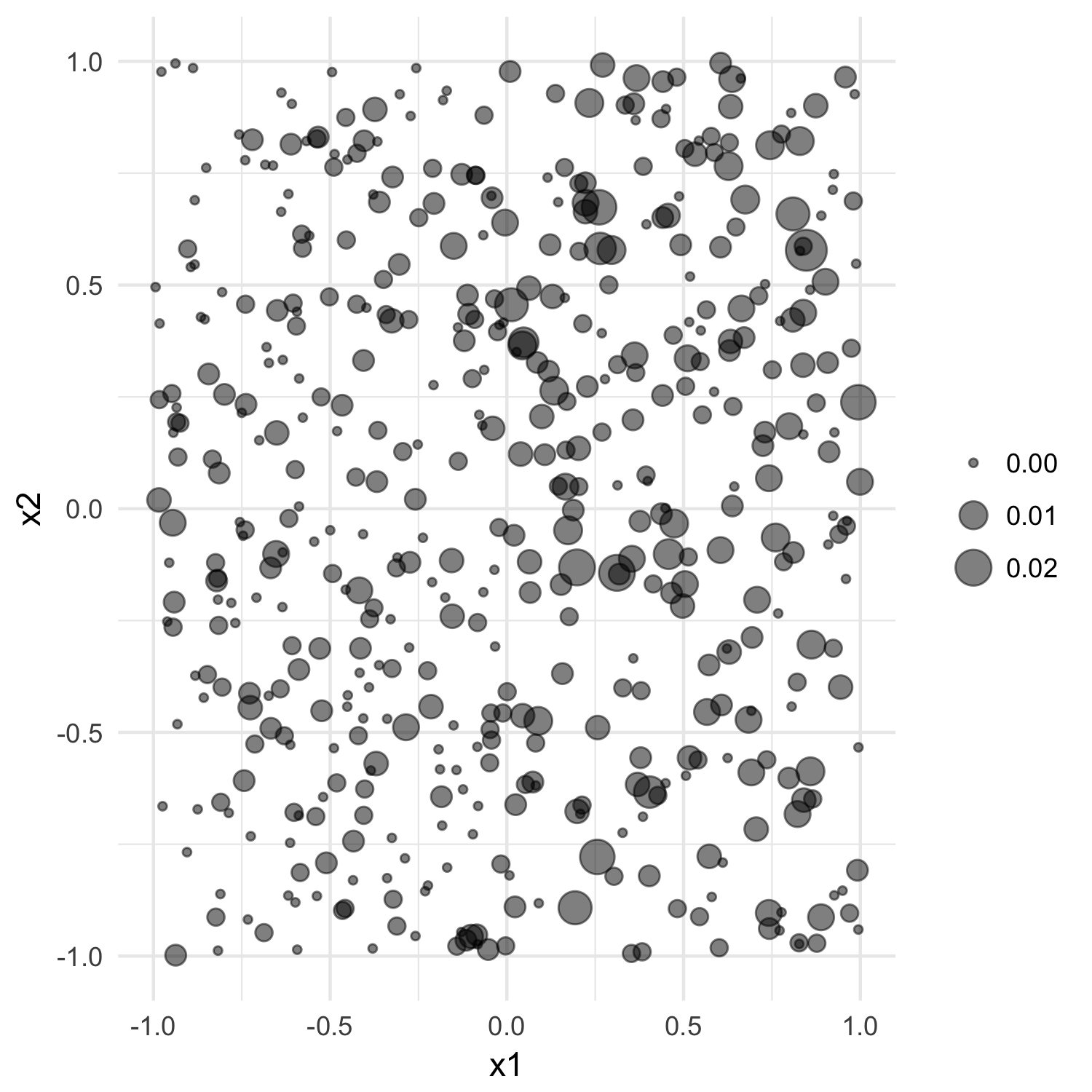}
        \caption{$\mtry=1$}
  \label{subfig:2d_points_mtry1}
     \end{subfigure}
     \quad
     \begin{subfigure}[b]{0.3\textwidth}
        \includegraphics[width=\textwidth]{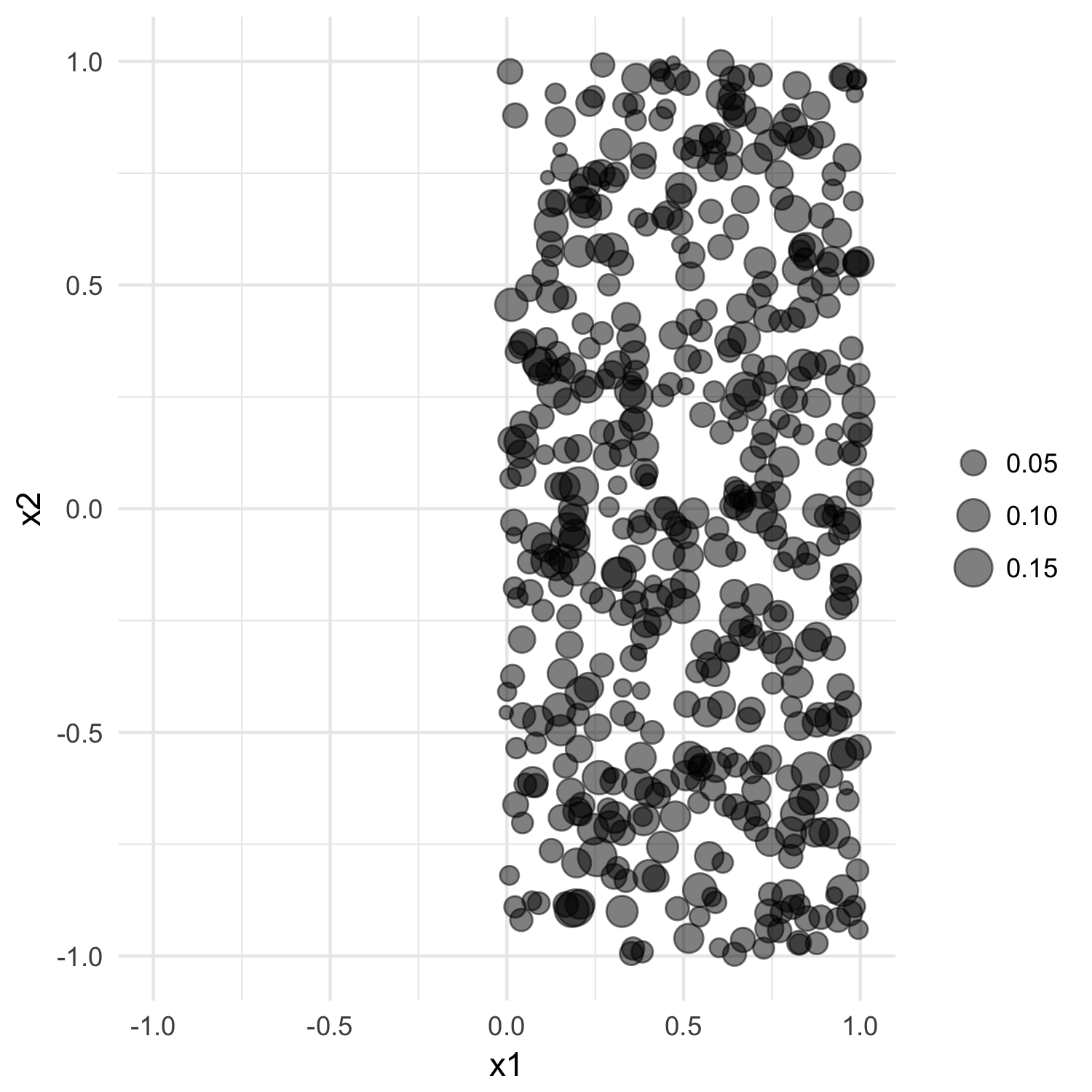}
        \caption{$\mtry=30$}
  \label{subfig:2d_points_mtry30}
     \end{subfigure}
     \caption{Plots indicating the training points contributing to the random forest's prediction at $x_0 = (0.5, 0, \ldots, 0)$ projected on the $(x_1, x_2)$ axis.  The size of the plot point indicates the fraction of trees in the forest for which each point appears in the same terminal node as the target point $x_0$. }
          \label{fig:2d_points}
\end{figure}

\subsection{Outline}
\label{subec:outline}

Our contribution is to frame random forest probability estimation in the framework of kernel regression, and to exploit this framework to understand how to use random forests to produce better probabilities.  The explicit connection between random forests and kernel methods has recently appeared in \cite{scornet2016}, although this connection is implicit in earlier works of Leo Breiman \citep{breiman2000}, \citep{breiman2004}.  This work shows that regression random forests can be viewed as close cousins of kernel regression, in which the kernel function is the \textit{proximity function}, which will be discussed in more depth later in the paper.  We build on this work by studying the shape of the random forest kernel, especially as it relates to the forest's parameter settings.  As a key tool to developing intuition about the behavior of this kernel, we develop an analytical approximation to the kernel of a simplified model of a random forest studied in other parts of the literature.  This model allows us to bridge intuition about bandwidth selection in Nadaraya-Watson type kernel estimation to understand the role of parameter tuning in random forest probability estimation.

We will begin in Section~\ref{sec:background} by providing more formal background on random forest probability estimation, as well as notation which will facilitate our discussion throughout the paper.  Next, we will introduce the concept of the \textit{proximity function} in Section~\ref{sec:prox_probs}, and we will relate random forests to kernel regression methods.  This not only allows us to view random forest probabilities in a more principled way, but also motivates discussion of the random forest proximity function as a fundamental quantity of interest in probability estimation.  In order to better understand the shape of the random forest kernel and its connection to tuning parameters, we develop a kernel approximation to a simplified model of a random forest in Section~\ref{sec:kernel_intuition}.  This will allow us to bridge our intuition about kernel regression to random forest probability estimation.  We will then confirm the intuition developed from our simple model in Section~\ref{sec:empirical_kernel}, and will consider more extensive simulation experiments in Section~\ref{sec:prob_comp}.

\section{Background}
\label{sec:background}

In this section, we will establish mathematical notation to facilitate of discussion in the rest of the paper, as well as provide background on the existing literature in probability estimation in random forests.

\subsection{Setup and Notation}
\label{subsec:notation}

In the standard set-up for binary classification problems, we observe $n$ pairs of training data points $(x_1, y_1), \ldots, (x_n, y_n)$ with $x_i \in \mathcal{X} \subset \mathbb{R}^{p}$ and $y_i \in \{0, 1\}$, and the goal is to learn a mapping from $x_i$ to $y_i$.  To make the problem more statistically tractable, it is often assumed that these pairs are independent and identically distributed, and that $x$ and $y$ are related according to an unknown conditional class probability function $\condpr{y=1}{x}$.  The goal of the analyst in classification is simply to discriminate whether $\condpr{y=1}{x} \geq 0.5$ to predict the class of a new test point $x$.  The related problem of directly estimating the probability of class membership $\condpr{y=1}{x}$ is much more difficult.  This is the problem we consider in this paper in the context of random forests.

Recall that a random forest for classification consists of a collection of $T$ un-pruned decision trees, where each tree is grown on a bootstrap sample of the data, and the split variable at each node of the tree is chosen to be the best among a subset of size $\mtry$ randomly chosen predictors.  The randomness introduced in each tree is governed by a random variable $\theta \in \Theta$ (i.e. the bootstrap sample that was chosen, as well as the candidate subset of random predictors at each node of the tree).  The parameter $\theta$ will serve us primarily as a way to index the trees in the forest.  Each decision tree partitions the input space into hyper-rectangles formed by the terminal nodes.  We will denote the terminal node in the tree generated by $\theta \in \Theta$ to which a point $x$ belongs by $\R{\theta}{x}$, and we will denote the number of sample points in this node by $\N{\theta}{x}$.

To simplify notation, we will assume that the bootstrap step is not used in the tree-growing process.  With this caveat, we can define the prediction for a single tree at a single point $x_0$ by
\begin{equation*}
f\parens{\theta, x_0} = \sum^{n}_{i=1} \frac{\I{x_i \in \R{\theta}{x_0}} y_i}{\N{\theta}{ x_0}}.
\end{equation*}
In words, to make a prediction at a point $x_0$, we simply determine which terminal node that point belongs to, and then return the fraction of training points in that node for which $y=1$.  A random forest is constructed from a selection of independent random draws $\theta_1, \ldots, \theta_T$ and the associated trees $f\parens{\theta_1, \cdot}, \ldots, f\parens{\theta_T, \cdot}$.  We will consider two types of random forests for probability estimation, \textit{classification forests}, denoted by $\rf{\cdot}{class}$ and \textit{regression forests}, denoted by $\rf{\cdot}{reg}$.  We can define these as
\begin{align*}
\rf{x_0}{reg} &=  \frac{1}{T} \sum^T_{t=1} f\parens{\theta_t, x_0} \\
\rf{x_0}{class} &= \frac{1}{T} \sum^T_{t=1} \text{round} \parens{f\parens{\theta_t, x_0}}. 
\end{align*}

In the case of a classification random forest, we estimate probabilities simply by making a class prediction for each tree $\text{round} \parens{f\parens{\theta_t, x_0}}$, and counting the fraction of trees that vote for a certain class.  In practice, classification forest trees are often grown to a terminal node size of one.  In later sections, we will be concerned with the extent to which $\rf{x_0}{class}$ and $\rf{x_0}{reg}$ approximate the conditional class probability $\condpr{y=1}{x_0}$.  It is not apriori obvious that counting the fraction of trees that vote for a certain class, as in a classification random forest, is principled way to estimate this quantity.

\subsection{Literature on Probability Estimation}
\label{subsec:lit}

  There has been relatively little literature on probability estimation in random forests.  It is known that classification random forests probabilities are typically uncalibrated, but produce among the best estimates among machine learning classifiers after calibration \citep{niculescu2005}.  Other research has investigated the usefulness of correcting probabilities in random forests using Laplace and m-estimates at the nodes \citep{bostrom2007}.  There also exists limited empirical evidence comparing the efficacy of regression and classification random forest probabilities in a number of simulated and real data settings \citep{li2013}.

  Recently, researchers have argued that regression random forests are appropriate for probability estimation \citep{malley2012}, \citep{kruppa2014}.  This claim follows from recent work showing that regression random forests are consistent in a number of settings \citep{scornet2015}.  Essentially, the argument is that if one passes the binary random variable $Y$ in a regression forest, then consistency in conditional class probability follows since $\E{Y | X} = \condpr{Y=1}{X}$.  We contend that this argument is too simplistic to explain how to obtain useful probability estimates in practice.  For one, random forest probabilities are not always good, even if this theory says otherwise \footnote{The example from Section~\ref{subsec:motivation} has the same qualitative outcome if we use a regression random forest instead of a classification random forest, even with a very large training set.}. Secondly, one must appreciate that classification and regression random forests are extremely similar, except for the default parameter settings and aggregation method.  Indeed, we argue in the appendix that in the binary outcome case, the splitting criteria for regression trees is exactly the same as that for classification trees.  We assert that performance differences in probability estimation simply result from differences in default parameter settings: regression random forests have a larger default setting for $\mtry$, and in many cases this can account for any discrepancy in probability estimation performance.  We will argue in the rest of the paper that a kernel perspective on probability estimation is more appropriate since it emphasizes that tuning parameters are critical in practice.

\section{Probabilities from the Proximity Functions}
\label{sec:prox_probs}

In this section we will introduce the concept of the proximity function and relate it to the task of probability estimation.  We will begin by motivating the proximity function as a natural distance measure induced by a random forest, and we will briefly describe some of its traditional uses.  Next, we will argue that the probability estimates produced through voting resemble those of kernel regression, where the kernel is given by the proximity function.

\subsection{The Proximity Function}
\label{subsec:prox_function}

We will begin by defining a natural distance metric induced by a random forest, the \textit{proximity function}.
\begin{definition}[The Proximity Function]
\label{def:prox}
Grow a random forest according to $\theta_1, \ldots, \theta_T$ on a training set $(x_1, y_1), \ldots, (x_n, y_n)$.  The \textit{proximity function} is a mapping $K_T : \mathcal{X} \times \mathcal{X} \rightarrow [0,1]$ where $K_T(x,z) = \frac{1}{T} \sum^T_{t=1} \I{x \in \R{\theta_t}{z}}$.
\end{definition}
In words, the proximity function simply measures the fraction of trees in a forest for which two test points appear in the same terminal node.  One can view this quantity as a natural notion of similarity between two points: the more times these two points appear in the same terminal node of a tree, the more similar they are.  In the limit of an infinite number of trees, it is also natural to consider a population version of the proximity function given by $K(x,z) = \prsubs{\theta}{z \in \R{z}{\theta_t}}$.  It is easy to argue from the law of large numbers that $K_T (x,z) \rightarrow K(x,z)$ as $T \rightarrow \infty$ almost surely \citep{breiman2000}.  Since trees generate partitions of the input space, one also has the interpretation that the proximity function gives the probability that two points are in the the same cell of a randomly chosen partition.  The notation $K_T$ and $K$ is also meant to be suggestive: both are self-adjoint, positive definite kernels that are bounded above by one \citep{breiman2000}.

The most common use of the proximity function is for clustering training data.  To this end, one can define the \textit{proximity matrix} $P \in \mathbb{R}^{n \times n}$ where $P_{i,j} = K_T \left( x_i, x_j\right)$.  In other words, the proximity matrix contains all of the pairwise similarities among the training data.  By the kernel properties of the proximity function, one can easily argue that the matrix $D \in \mathbb{R}^{n \times n}$, $D_{i,j} = 1 - P_{i,j}$ defines a Euclidean distance matrix.  One can then appeal to multidimensional scaling to a find a lower dimensional representation of the data that approximately respects the similarity measure induced from the proximity function.  In addition to clustering, the proximity function has an number of other creative applications including missing data imputation and outlier detection.  See \cite{berk2008} for more discussion.

\subsection{Proximity Probabilities}
\label{subsec:prox_probs}

Given our motivation of the proximity function as a measure of similarity between points, it is reasonable to explore how we might use this function to create probability estimates.  To this end, let us first recall the form of Nadaraya-Watson kernel-weighted regression.  For our purposes, a kernel function $K : \mathcal{X} \times \mathcal{X} \rightarrow \mathbb{R}^{+}$ is a non-negative function such that $K \left(x, z \right)$ is large when $x$ and $z$ are close in some sense.  In practice, one often considers parameterized families, such as the Gaussian family $K_\lambda \left(x, z \right) = \exp{ \parens{\frac{-||x-z||^2_2}{2 \lambda }}}$.  Here, $\lambda$ is referred to as the bandwidth of the kernel, and controls the bias-variance trade-off in estimation by changing the size of the local neighborhood.  A kernel regression estimate of conditional class probability is then given by
\begin{equation*}
\condpr{y=1}{x_0} = \frac{\sum^N_{i=1} K \parens{x_0, x_i} y_i}{ \sum^N_{i=1} K \parens{x_0, x_i} }.
\end{equation*}
If we let $w_i(x_0) = \frac{K \parens{x_0, x_i}}{\sum^N_{i=1} K \parens{x_0, x_i}}$, we can simply think of a kernel estimate as a weighted combination of training data labels, with weights proportional to the similarity between a given training point and the target points $x_0$, i.e. $\condpr{y=1}{x_0} = \sum^n_{i=1} w_i(x_0)  y_i$.

We can also make a simple argument that kernel regression is an intuitively appealing method for estimating conditional class probabilities.  Suppose we estimate marginal and conditional probabilities in the following manner: $\prhat{y=1} = \frac{n_1}{n}$, $\prhat{x_0} = \frac{1}{n} \sum^n_{i=1} K_\lambda \parens{x_0, x_i}$,  $\condprhat{x_0}{y=1} =  \frac{1}{n_1}\sum_{i: y_i = 1} K_\lambda \parens{x_0, x_i}$, where $n_1$ is the number of training points in the data set for which $y_i = 1$.  Notice that the last two expressions are just kernel density estimates of $\condpr{x_0}{y=1}$ and $\pr{x_0}$, respectively.  We can then write an estimate for the conditional class probability using Bayes rule, plugging in kernel density estimates for the appropriate quantities as follows:

\begin{align*}
\condpr{y=1}{x_0} &= \frac{\pr{y=1} \condpr{x_0}{y=1}}{\pr{x_0}} \\
&\approx \frac{\prhat{y=1} \condprhat{x_0}{y=1}}{\prhat{x_0}} \\
&= \frac{\frac{n_1}{n} \frac{1}{n_1}\sum_{i: y_i = 1} K_\lambda \parens{x_0, x_i}}{ \frac{1}{n} \sum^n_{i=1} K_\lambda \parens{x_0, x_i}} \\
&= \frac{\sum^n_{i=1} K_\lambda \parens{x_0, x_i} y_i}{\sum^n_{i=1} K_\lambda \parens{x_0, x_i}}.
\end{align*} 

We will now make a connection between the probabilities generated from voting and the proximity function.  Note that the connection between regression random forests and kernel regression was pointed out in \cite{scornet2016}.  Our focus here is to cast probability estimation in this light in order to explain the role of random forest parameters in controlling the quality of probability estimates, and we believe it is most natural to do this in the kernel regression setting.  In later sections, we will devote effort to explaining how tuning parameters such as $\mtry$ and the number of nodes in each tree act as bandwidth parameters for this estimate.

Recall from Section~\ref{sec:background} that a random forest estimate of probabilities is given by $\condpr{y=1}{x_0} = \frac{1}{T}\sum^{T}_{t=1} f\parens{x_0, \theta_t}$.  We can plug in the definition of $f \parens{x_0, \theta_t}$ to re-write this estimate in a more enlightening form:

\begin{align*}
\label{eq:prox_prob}
\condpr{y=1}{x_0} &= \frac{1}{T}\sum^{T}_{t=1} \sum^{n}_{i=1} \frac{\I{x_i \in \R{\theta_t}{x_0}} y_i}{\N{\theta_t}{x_0}} \\
&= \sum^{n}_{i=1} y_i \left( \frac{1}{T} \sum^{T}_{t=1}\frac{\I{x_i \in \R{\theta_t}{x_0}}}{\N{\theta_t}{x_0}} \right)  \\
& \approx  \sum^{n}_{i=1} y_i \left( \frac{\sum^{T}_{t=1} \I{x_i \in \R{\theta_t}{x_0}}}{\sum^{T}_{t=1} \N{\theta_t}{x_0}} \right) \\
& = \sum^{n}_{i=1} \frac{K(x_0, x_i) y_i}{\sum^n_{i=1} K(x_0, x_i)}.
\end{align*}
We will refer to the quantity in the last step as the \textit{proximity probability} estimator of conditional class probability.
\begin{definition}[Proximity Probabilities]
\label{def:prox}
The proximity probability estimate of conditional class probability at a test point $x_0$ is defined to be
\begin{equation*}
\rf{x_0}{prox} \equiv \sum^{n}_{i=1} \frac{K(x_0, x_i) y_i}{\sum^n_{i=1} K(x_0, x_i)}.
\end{equation*}
\end{definition}
In the third step, we use the approximation that $\frac{1}{T} \sum^{T}_{t=1} \frac{\text{a}_t}{\text{b}_t} \approx \frac{ \frac{1}{T} \sum^{T}_{t=1}\text{a}_t}{\frac{1}{T} \sum^{T}_{t=1}\text{b}_t}$ for any positive sequence of numbers $a_t$ and $b_t$ \footnote{We also use the fact that 
\begin{align*}  \frac{1}{T}\sum^{T}_{t=1} \N{\theta_t}{x_0} &= \frac{1}{T} \sum^{T}_{t=1} \sum^{n}_{i=1} \I{x_i \in \R{\theta_t}{x_0}} \\  
&= \sum^{n}_{i=1} \frac{1}{T} \sum^{T}_{t=1} \I{x_i \in \R{\theta_t}{x_0}} \\
&= \sum^{n}_{i=1} K \parens{x_0, x_i}
\end{align*}.}.  It is interesting to consider this approximation in the context of positive random variables $X$ and $Y$: the approximation is merely assuming that $\E{\frac{X}{Y}} \approx \frac{\E{X}}{\E{Y}}$.  Note that we simply use this approximation as intuition for motivating a different way of aggregating probabilities.  In particular we make no claims about the quality of this approximation.  When this is the case, $\rf{x_0}{prox}$ will be similar to the estimated probability estimated in the usual way.

\subsection{Intuition}
\label{subsec:prox_intuition}

We will close this section with a few final thoughts on the similarities and differences between the different ways of using a random forest for probability estimation.  First, it will be helpful to define a quantity which gives a conditional class probability estimate for a point $x_0$ for a given tree generated by $\theta_t$: $p_t\left( x_0 \right) = \sum^{n}_{i=1} \frac{\I{x_i \in \R{\theta_t}{x_0}} y_i}{\N{\theta_t}{x_0}}$.  With this notation in hand, we can give three expression for probability estimates generated by a regression, classification, and proximity random forest:
\begin{align*}
\rf{x_0}{reg} &= \frac{1}{T} \sum^{T}_{t=1} p_t\left( x_0 \right) \\
\rf{x_0}{prox} &= \sum^{T}_{t=1} \frac{\N{\theta_t}{x_0}}{\sum^{T}_{t^{'}=1} \N{\theta_{t^{'}}}{x_0}} p_t\left( x_0 \right)  \\
\rf{x_0}{class} &= \frac{1}{T} \sum^{T}_{t=1} \text{round}\left( p_t\left( x_0 \right) \right).
\end{align*}
First, one should note that in classification setting, the typical default is to grow a tree to node purity, so $\rf{x_0}{prox}$ agrees with $\rf{x_0}{class}$ in this case.  In the case where nodes are not grown to purity, there is an interesting difference between the way regression random forest probabilities and proximity probabilities differ.  In particular, regression random forests average node probabilities across trees with equal weights, while the proximity probability averages probabilities with weights proportional to the number of points in each node.  Stated in another way, $\rf{x_0}{prox}$ forms a probability estimate by collecting all points in the training set (with multiplicity) that share a terminal node with the target point, and taking a grand mean.  If one really does believe that $p_t\left( x_0 \right)$ is an estimate of a conditional class probability, this might be a reasonable strategy when some nodes have small counts.  The form of $\rf{x_0}{class}$ seems to suggest that it is unlikely that $p_t\left( x_0 \right)$ do in fact estimate probabilities: if we consider $p_t\left( x_0 \right)$ to be a random variable with expectation $\condpr{y=1}{x_0}$ and very small variance, $\rf{x_0}{class}$ would tend to produce estimates that are all close to 0 or 1.

\section{Kernel Intuition}
\label{sec:kernel_intuition}

We argued in the previous section that random forest probabilities can be fruitfully viewed from a kernel regression point of view.  Central to kernel regression is the shape of the kernel and associated bandwidth parameter.  Unfortunately, the mathematics of the original random forest algorithm forest are complicated and make analytical expressions for the kernel intractable.  In this section, we derive the kernel in a simplified setting considered in \citet{breiman2000}, \citet{breiman2004}, \citet{biau2012}.  In particular, we demonstrate that a weighted Laplacian kernel is a good analogy to a random forest, and we leverage this analogy to provide intuition for how the parameter choice in a random forest informs the quality of its probability estimates.

\subsection{A Naive Model of a Random Forest Kernel}
\label{subsec:naive_model}

While the mechanics are quite simple, the mathematics of a random forest make analytical expressions for quantities of interest intractable.  As a result, the literature tends to focus on stylized models that make analysis more amenable.  In this vein, we construct a simplified model of a random forest by borrowing elements from models considered in \citet{breiman2000}, \citet{breiman2004}, \citet{biau2012}.  Closed for expressions for other random forest models have been considered elsewhere, but our model contains richer elements that more closely capture adaptive splitting \citep{scornet2016}.  As we will demonstrate, adaptive splitting is key to constructing a kernel which adapts to the shape and sparsity of the conditional class probability function.

The simplified model we will consider is given in Algorithm~\ref{algo:simple_rf}.  To begin, we assume that the domain of the predictor variables is $\mathcal{X} = [0,1]^p$.  Of the $p$ predictors, $S$ of them are related to the response variable, and we call such predictors \textit{strong}, and the remaining $W$ predictors are unrelated to the response, and we call these weak.  One may equivalently call strong variables ``signal" variables, and weak variables ``noise" variables.  We fix a number $M$ of tree nodes before tree growing begins.  At each stage of the growing process, we select a node at random to split on, and randomly select $\mtry$ predictors.  Among the $\mtry$ variables chosen, we split only on the strong ones with equal probability (unless only weak variables are chosen).  This mechanism mimics adaptivity in the forest, and one can indeed verify in simulated examples that the predictors which are related to the response do indeed tend to be split on with higher probability \citep{biau2012}.  Finally, the chosen predictor is split on uniformly at random.

%\begin{algorithm}[htp]
%%\newcommand{\thealgorithm}{\arabic{algorithm}}
%\caption{Simplified Random Forest Tree}
%\begin{algorithmic}
%\State 1. The domain $\mathcal{X}$ is the unit cube $[0,1]^p$.  There are $S$ strong variables and $W$ weak \\ \indent \indent variables, where $S + W = p$.  Fix a number of tree nodes $M$.
%\State 2. For $m = 1:M$:
%\State \indent (a) Select a terminal node $\mathfrak{t} \in \{\mathfrak{t}_1, \ldots, \mathfrak{t}_{m-1}\}$ uniformly at random.
%\State\indent  (b) Select $\mtry$ predictors uniformly at random.  $S_*$ of these predictors will be strong, \\ 
%\indent \indent \indent and $W_*$ will be weak.
%\State\indent  (c) Among the $S_*$ strong predictors, select a strong predictor $x_*$ with probability $1/S_*$.
%\State\indent  (d) Choose a split value for $x_*$ uniformly at random to create a new node $\mathfrak{t}_m$.
%\end{algorithmic}
%\label{algo:simple_rf}
%\end{algorithm}

\begin{algorithm}[htp!]
\caption[Simplified Random Forest Tree]{Simplified Random Forest Tree}
\begin{algorithmic}
\State 1. Specify the number of leafs $M$; initialize $\texttt{leafs} = \{\mathfrak{t}_{root}\}$.
\State 2. For $m = 1:M$:
\State \indent (a) Select a terminal node $\mathfrak{t} \in \texttt{leafs}$ uniformly at random.
\State\indent  (b) Split $\mathfrak{t}$ into daughter nodes $\mathfrak{t}_{L}$, $\mathfrak{t}_{R}$
\State\indent\indent (i) Choose $\mtry$ predictors at random $\mathcal{F} \subseteq \{1, \ldots, p\}$
\State\indent\indent (ii) Select split variable uniformly among the $S_{*}$ \\ \indent \indent \indent \hspace{5mm} signal variables in $\mathcal{F}$
\State\indent\indent (iii) Choose split point uniformly at random
\State \indent (c) Replace $\mathfrak{t}$ with $\mathfrak{t}_{L}$ and $\mathfrak{t}_{R}$ in \texttt{leafs}
\end{algorithmic}
\label{algo:simple_rf}
\end{algorithm}

Note that the above algorithm gives a probability model for a fixed tree in the forest.  We are interested in the probability that such a randomly drawn tree contains two fixed points in one of its terminal nodes, which is precisely the interpretation for a random forest proximity kernel.  In this setting, we can derive an approximate expression for the proximity kernel, which is given in the following proposition.

\begin{proposition}
\label{prop:kernel}
Suppose a random forest is grown to a size of $M$ nodes.  Under the above setting, the proximity function $K\parens{0, x}$ has the following approximation:
\begin{equation*}
K(0,x) \approx \exp{ \left\lbrace -\log{M}  \left(  p_{\mathcal{S}} \sum_{s \in \Strong} x_s  +  p_{\mathcal{W}} \sum_{w \in \Weak} x_w \right) \right\rbrace}
\end{equation*}
where 
\begin{align*}
p_{\mathcal{S}} &=  \sum^{S \wedge \mtry}_{k=1} \frac{{S-1 \choose k-1} {W \choose \mtry - k+1} }{k {p \choose \mtry}} \\
p_{\mathcal{W}} &= \frac{1-S p_{\mathcal{S}}}{W}.
\end{align*}
\end{proposition}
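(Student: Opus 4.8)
The plan is to decompose the calculation into two independent pieces: (i) the per-split probability that a given predictor is chosen as the split variable, which yields $p_{\Strong}$ and $p_{\Weak}$; and (ii) the probability that, along the sequence of splits ever applied to the cell containing the origin, none of them separates $x$ from $0$.

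First I would compute $p_{\Strong}$. Fix a signal coordinate $s$ and condition on the event that the random candidate set $\mathcal{F}$ contains exactly $k$ signal variables, one of which is $s$; by the splitting rule, $s$ is then selected with probability $1/k$. Counting size-$\mtry$ candidate sets that contain $s$ together with $k-1$ of the other $S-1$ signal variables and the remaining slots filled from the $W$ weak variables gives a hypergeometric probability, and summing $1/k$ against these terms produces the stated series for $p_{\Strong}$. Since exactly one variable is split on at each node, the normalization $S\,p_{\Strong}+W\,p_{\Weak}=1$ holds, which immediately gives $p_{\Weak}=(1-S\,p_{\Strong})/W$.

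Next I would identify where $\log M$ enters. The key observation is that the relevant quantity is the depth of the leaf containing the fixed point $0$, i.e. the number of splits ever applied to its cell. At iteration $m$ of Algorithm~\ref{algo:simple_rf} there are exactly $m$ leaves, each equally likely to be chosen, so the cell containing $0$ is split at iteration $m$ with probability $1/m$, independently across $m$; hence this depth is distributed as $\sum_{m=1}^{M}B_m$ with $B_m\sim\mathrm{Bernoulli}(1/m)$, and its mean is the harmonic number $H_M\approx\log M$. I would then write $K(0,x)$ as the probability that none of these splits cuts between $0$ and $x$. Treating each split point as uniform, a split on coordinate $d$ separates the two points with probability $x_d$ and falls on coordinate $d$ with probability $p_d$ ($p_{\Strong}$ if $d$ is signal, $p_{\Weak}$ if noise), so a given split fails to separate them with probability $1-\alpha$, where $\alpha=p_{\Strong}\sum_{s\in\Strong}x_s+p_{\Weak}\sum_{w\in\Weak}x_w$. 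Because the coordinate and cut drawn at each iteration are independent of the selection indicator $B_m$ and of the history, the survival events factor, giving $K(0,x)=\prod_{m=1}^{M}\left(1-\frac{\alpha}{m}\right)$. Finally $\log K(0,x)=\sum_{m=1}^{M}\log(1-\alpha/m)\approx-\alpha\sum_{m=1}^{M}\frac1m\approx-\alpha\log M$, and exponentiating yields the claimed formula.

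The main obstacle — and the source of the word ``approximation'' in the statement — is the geometric simplification in the third step. The split point at a node is in truth uniform over the current cell, not over $[0,1]$, so the separation probability for a second or later cut on the same coordinate is not exactly $x_d$, and the per-iteration survival events are not genuinely independent. The cleanest way to justify the replacement is to argue that in the regime of interest ($p$ large relative to the tree depth $\approx\log M$) each coordinate is split at most once with high probability, so only the first-cut probability $x_d$ matters; I would also invoke the harmless approximations $\log(1-\alpha/m)\approx-\alpha/m$ and $H_M\approx\log M$. I expect controlling the nested-cell geometry, rather than the combinatorics behind $p_{\Strong}$ or the harmonic-sum asymptotics, to be the delicate step.
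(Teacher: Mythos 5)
Your derivation of $p_{\Strong}$ and $p_{\Weak}$, and your identification of the $\log M$ factor, coincide with the paper's: the appendix likewise conditions on the number $R=k$ of strong variables in the candidate set with a $1/k$ selection factor, and likewise models the number of cuts received by the cell containing $0$ as a sum of independent Bernoulli$(1/m)$ selections with mean $\approx \log M$. Where you genuinely diverge is the treatment of the cut geometry. You replace every split by a uniform draw on $[0,1]$, so that each cut on coordinate $d$ separates $0$ from $x$ with probability $x_d$ regardless of history, which gives $K(0,x)\approx\prod_{m=1}^{M}\parens{1-\alpha/m}\approx e^{-\alpha\log M}$ in one line; you then flag the nested-cell problem and propose to excuse it in the regime where each coordinate is cut at most once with high probability. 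The paper instead confronts repeated cuts exactly: conditional on $k_j$ cuts on coordinate $j$, successive cut points satisfy $c_k \mid c_{k-1}\sim\mathcal{U}[0,c_{k-1}]$, and an inductive integral shows the non-separation probability is the tail $\pr{Z_j\geq k_j}$ of a Poisson variable with parameter $-\log x_j$; it then Poissonizes the multinomial cut counts (rates $p_{\Strong}\log M$ and $p_{\Weak}\log M$) and invokes Breiman's Laplace-transform approximation to collapse the resulting mixture into product form. Your route is more elementary and makes the bandwidth interpretation transparent, but it assumes away the dependence that the paper's Poisson-tail computation handles head-on; conversely, the paper's route is heavier yet localizes all approximation error in two identifiable steps (Poissonization and the Laplace-transform step), and notably your shortcut yields the linear-in-$x_j$ exponent directly, whereas the paper's chain only recovers it through that final Breiman step. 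Since both arguments are avowedly heuristic, yours is an acceptable alternative derivation, and your closing paragraph correctly identifies the one place where it is weakest.

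One concrete discrepancy: the counting you describe (the candidate set contains $s$, together with $k-1$ of the other $S-1$ strong variables and $\mtry-k$ weak variables) yields $\binom{S-1}{k-1}\binom{W}{\mtry-k}\big/\bigl(k\binom{p}{\mtry}\bigr)$, i.e.\ $\binom{W}{\mtry-k}$ rather than the $\binom{W}{\mtry-k+1}$ appearing in the statement (and in the paper's own appendix), so your argument does not literally ``produce the stated series.'' A sanity check with $S=W=1$ and $\mtry=2$ shows your version is the correct one (it gives $p_{\Strong}=1$, while the stated formula gives $0$); the stated coefficient appears to be an off-by-one slip in the paper, but you should have noticed that your count and the target formula disagree.
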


\begin{proof}
(See Appendix~\ref{sec:approximation})
\end{proof}

It is clear that even in this simple setting, the kernel will clearly not be translation invariant.  However, one can make the further approximation, as in \cite{scornet2016}, that $K\parens{x,z} \approx K\parens{0, |x-z|}$.

In a 2000 paper, Breiman derived an approximation to the proximity function in a simplified model in which predictors were selected uniformly at random at each stage of the growing process, with no distinction between ``strong'' and ``weak" variables \citep{breiman2000}.  The form of this kernel was correspondingly more simple, and took the form of a symmetric Laplacian density: $\exp{\parens{ -\frac{\log{M}}{p} ||x-z||_1}}$.  The advantage of our formulation is that it is evident that variables receive different importance weights in the proximity function.  We will demonstrate in Section~\ref{sec:empirical_kernel} that the proximity function does tend to adapt to the shape of the underlying conditional class probability function, especially in regards to sparsity.  In our model, the mechanism by which this adaptation takes place is through weights $p_s$ and $p_w$ from Proposition~\ref{prop:kernel}.  In particular, the kernel tends to be more narrow in the direction of ``strong" variables and flatter in the direction of ``weak" variables.  Correspondingly, our model captures this phenomenon by assigning higher weights to signal variables than weak variables, and the relative widths are controlled by the $\mtry$ parameter.  It is also possible to change the shape of the kernel by assigning different weights to different strong variables, but makes the analysis a bit more tedious.

\subsection{Naive Kernel Parameters}
\label{subsec:naive_model}

In this section, we will discuss how the value of $\mtry$ and the number of tree nodes affects the shape of the kernel derived in the previous section.  The values of $p_s$ and $p_w$ are visualized as a function of $\mtry$ in Figure~\ref{fig:naive_weights}.  We fix the number of strong variables as $S=5$, and  plot the weights for varying numbers of noise variables $W$.  Notice first that for any value of $W$, the weight for strong variables $p_S$ is a strictly increasing function of $\mtry$ and the weight for weak variables $p_w$ is a strictly decreasing function of $\mtry$.  Furthermore, for large enough values of $\mtry$, one can prove the weights for strong variables asymptote to $1/S$, and the weights for weak values converge to zero.  These qualitative findings accord with what we would expect in a random forest.  When $\mtry$ is large, the selection of candidate predictors likely contains a large number of signal variables, and such variables are preferred by tree splitting criteria since they lead to more pure daughter nodes.  The consequence of a larger weight on such variables is that the kernel is more narrow in directions of signal, which again accords with what we will empirically demonstrate in Section~\ref{sec:empirical_kernel}.  Furthermore, we can see that the value of $\mtry$ needed to filter out noise variables increases with the number of weak variables.

\begin{figure}[htp]
\centering
    \begin{subfigure}[b]{0.3\textwidth}
        \includegraphics[width=\textwidth]{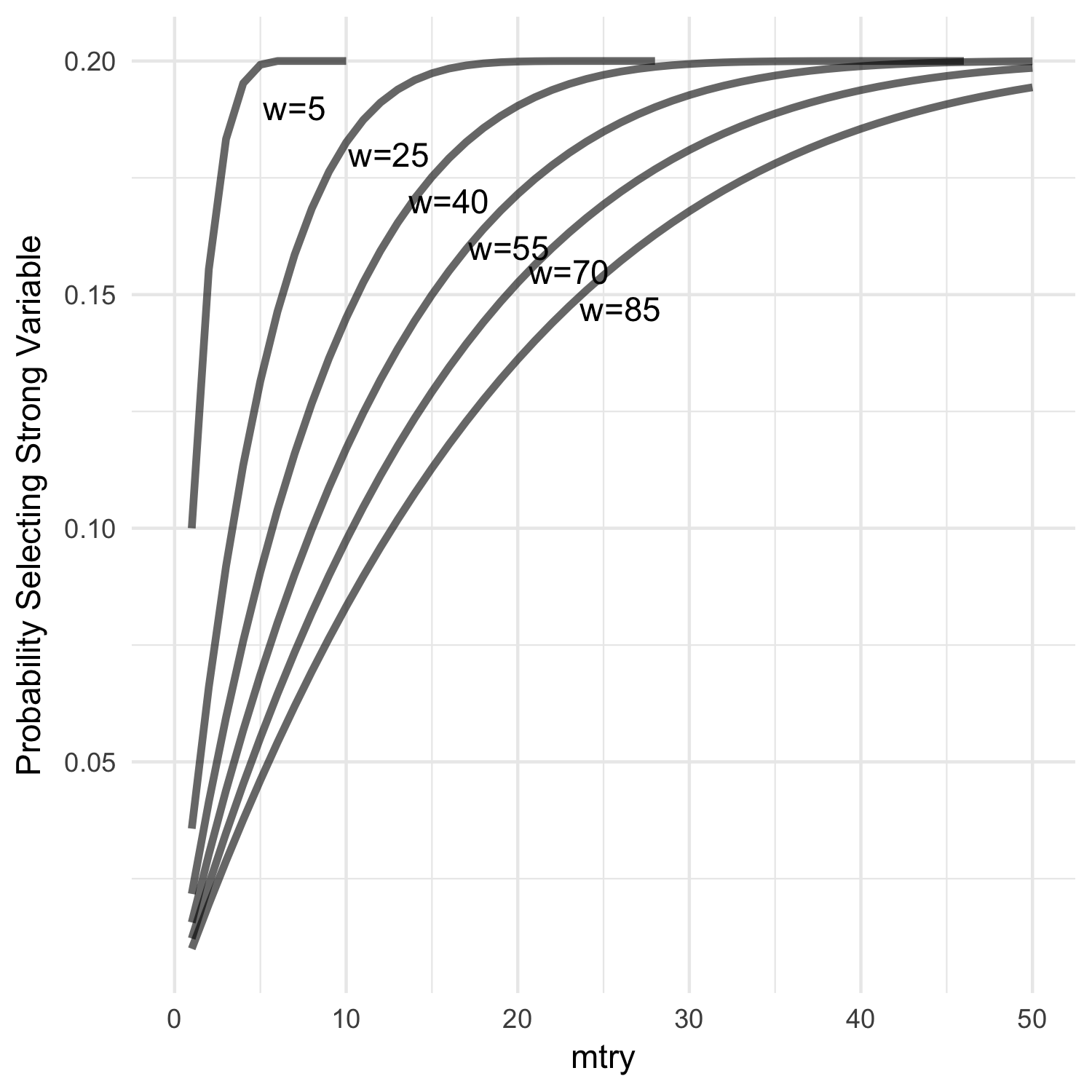}
        \caption{}
	\label{subfig:ps}
     \end{subfigure}
     \quad
     \begin{subfigure}[b]{0.3\textwidth}
        \includegraphics[width=\textwidth]{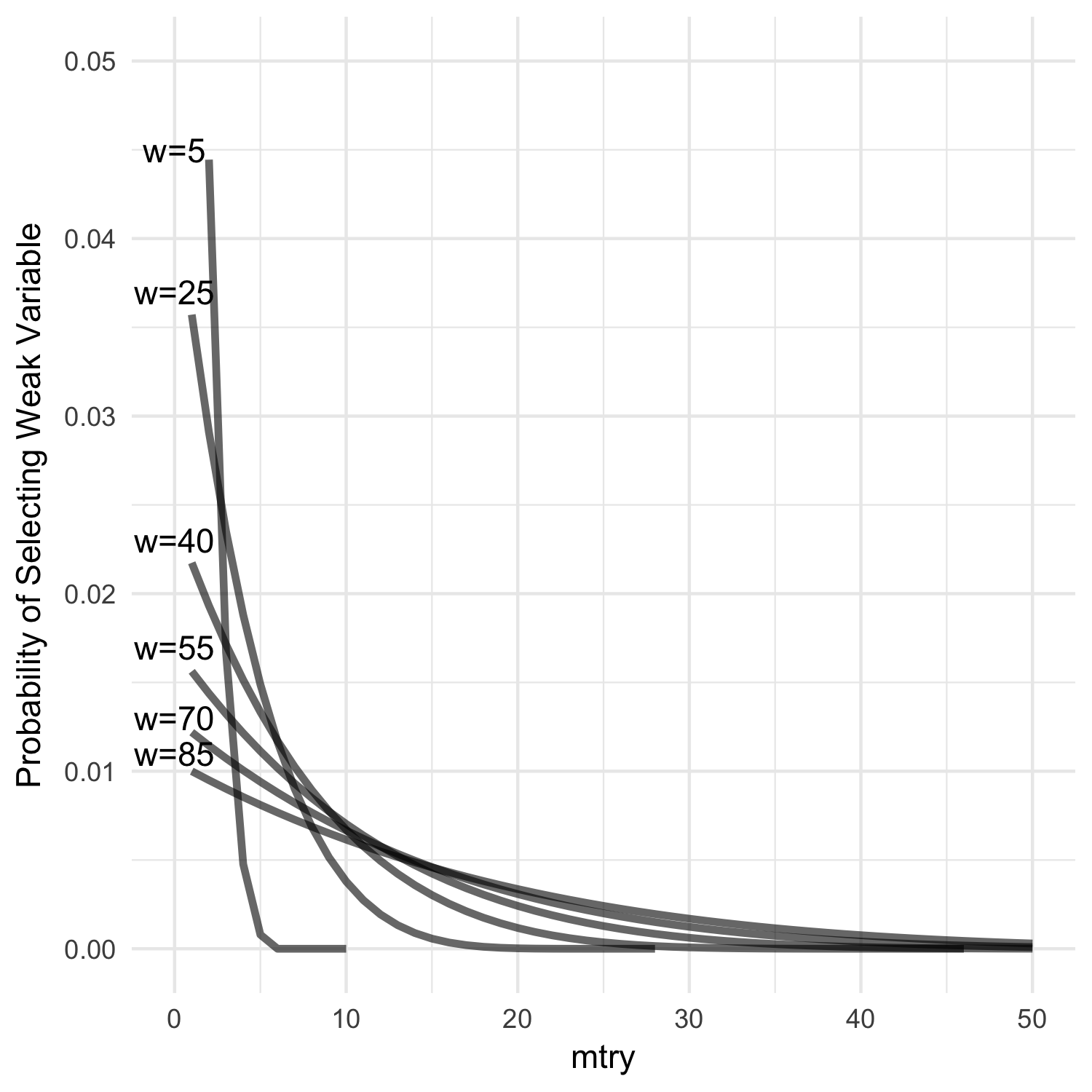}
        \caption{}
	\label{subfig:pw}
     \end{subfigure}
     \caption{The plots above show the probability of selecting strong and weak variables as a function of $\mtry$ in the naive model.  In both figures, the number of strong variables is fixed at 5, while the number of weak variable ranges from 5 to 85.}
          \label{fig:naive_weights}
\end{figure}

The number of terminal nodes also affects the shape of the proximity function derived in the previous section.  In particular, a larger number of nodes $M$ shrinks the diameter of the kernel by a multiplicative factor of $\log{M}$, making it more concentrated.  In our model, the number of nodes $M$ is a predetermined parameter, but in practice it depends on a number of characteristics of the data, such as the size of the data set and the value of $\mtry$.  For a data set of fixed size $n$, the number of tree nodes tends to decrease with larger values of $\mtry$.  The intuition is simple: an increase in $\mtry$ increases the number of signal variables appearing at each split, producing more shallow trees.  Figure~\ref{fig:terminal_nodes} shows a plot of the average number of nodes in a random forest tree for a simulated example as a function of $\mtry$, which confirms our intuition.

\begin{figure}[htp]
\centering
        \includegraphics[width=0.3\textwidth]{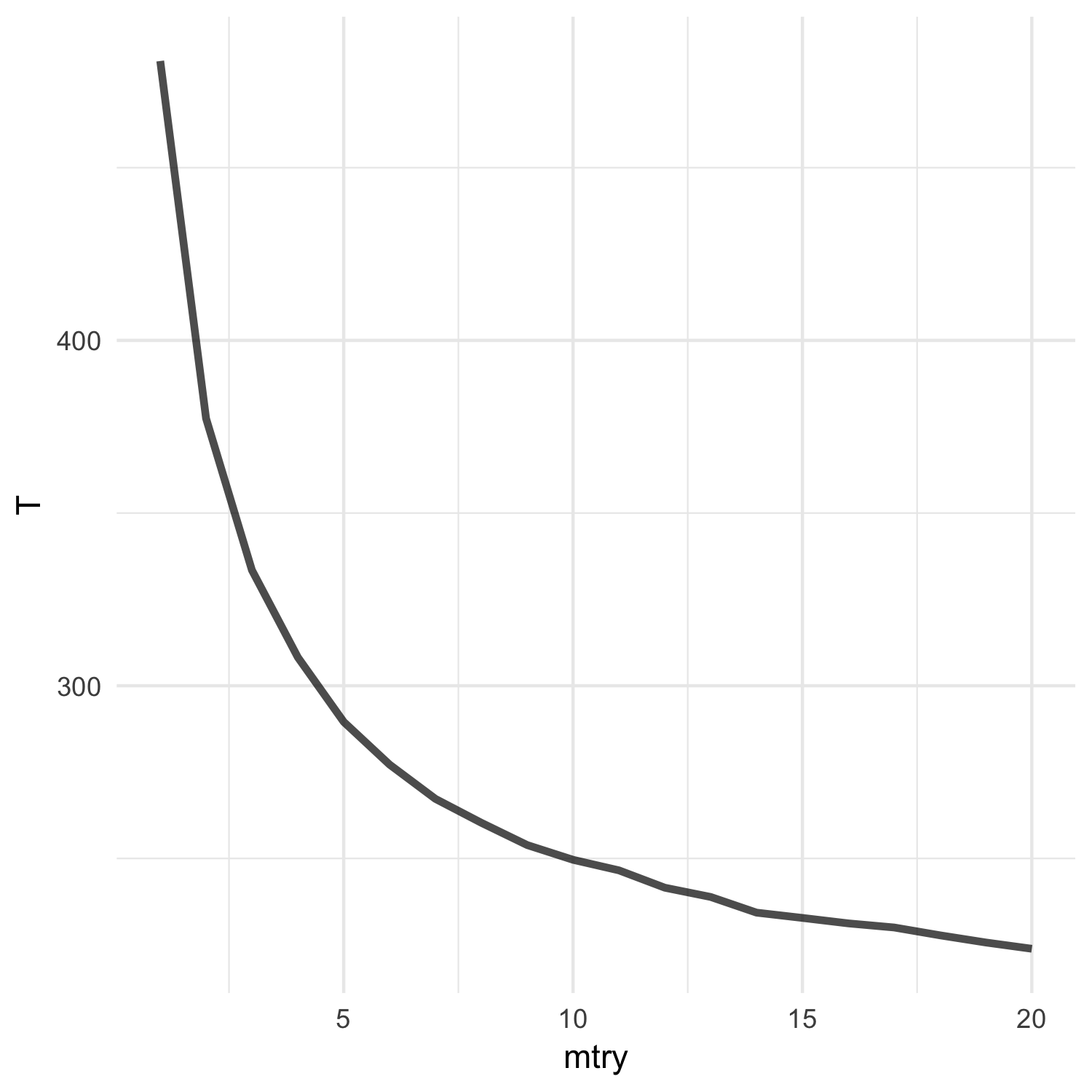}
        \caption{The number of terminal nodes as a function of $\mtry$ for a simulated example.}
	\label{fig:terminal_nodes}
\end{figure}

We can summarize the main conclusions from our simplified model as follows:

\begin{itemize}
\item[(i)] The value of $\mtry$ controls the relative width of the kernel in signal and noise dimensions.
\item[(ii)] An increase in the number of nodes in the tree shrinks the kernel equally in all directions.
\item[(iii)] Larger values of $\mtry$ are needed to sufficiently ``zero out" the weight of noise variables in high dimensions.
\end{itemize}

\subsection{Laplace Kernel Regression}
\label{subsec:laplace}

Given the form of the kernel we derived in Section~\ref{subsec:naive_model} and the implications for its parameters explored in Section~\ref{subsec:naive_model}, we now develop some intuition for a optimal choices for kernel regression with a weighted Laplace kernel.  The idea here is to explore the role of kernel weights in producing good probability estimates, and then to tie these weights to random forest parameters.  In kernel regression, one typically selects a kernel before looking at the data.  We would like to demonstrate that a kernel that adapts to the data is much more powerful, and is in fact essential in higher dimensions.

We will consider here kernels of the form 
\begin{equation*}
K_{\lambda, w} (x,z) = \exp\{ -\lambda \sum^p_{j=1} w_j |x_j - z_j| \}
\end{equation*}
where $w_i$ are non-negative weights summing to one.  The kernel weights $w_i$ determine the shape of the kernel, and $\lambda$ controls the concentration.  It is clear that in the case of infinite data, the kernel weights do not matter - however, this is not the case in finite data.  An appropriately shaped kernel can drastically affect the quality of probability estimates.

In the following example, we will compare conditional probability estimates using kernels with different shapes.  We will begin by drawing $n=500$ points uniformly at random from the square $[-25,25]^2$, and then a label $y \in \{0,1\}$ with the conditional probability
\begin{equation}
\label{eq:circle}
\condpr{y=1}{x} = 
\begin{cases}
1 &\mbox{if } r(x) < 8 \\
\frac{28-r(x)}{20} &\mbox{if }  8 \geq r(x) \geq 28  \\
0 & \mbox{if } r(x) \geq 1.                         
\end{cases}
\end{equation}
where $r(x) = \sqrt{\left( x_1^2 + x_2^2 \right)}$.  This is the ``circle model" from \citet{mease2007}.  Note that the level sets of the conditional class probability model are simply concentric circles, which are shown in Figure~\ref{fig:kernel_example}.  We consider probability estimation with two different kernels.  The first is $K_{\lambda, (1,1)} (x,z) = \exp{\parens{-\lambda \left( |x_1 - z_1| +  |x_2 - z_2| \right)}}$ and the second is $K_{\lambda, (10,1)} (x,z) = \exp{\parens{ -\lambda \left( 10|x_1 - z_1| +  |x_2 - z_2| \right)}}$.  Note that the first kernel is has symmetric level sets - which is well-suited for this problem - and the second has level sets which are very skewed in the $x_1$ direction.  In Figure~\ref{fig:kernel_example} we plot the RMSE for probability estimation on a holdout set using each of these kernels for various values of the bandwidth parameter $\lambda$.  The best RMSE achieved by the symmetric kernel is 0.07, while the best achieved by the skewed kernel is 0.15.  This comes as no surprise: with a finite amount of data, the shape of the kernel is critical.  Interestingly, both of these estimators achieve the same misclassification rate when used for class estimation.  We would again like to emphasize the point that misclassification is generally a much more forgiving task than probability estimation: tuning matters!

\begin{figure}[htp]
\centering
    \begin{subfigure}[b]{0.3\textwidth}
        \includegraphics[width=\textwidth]{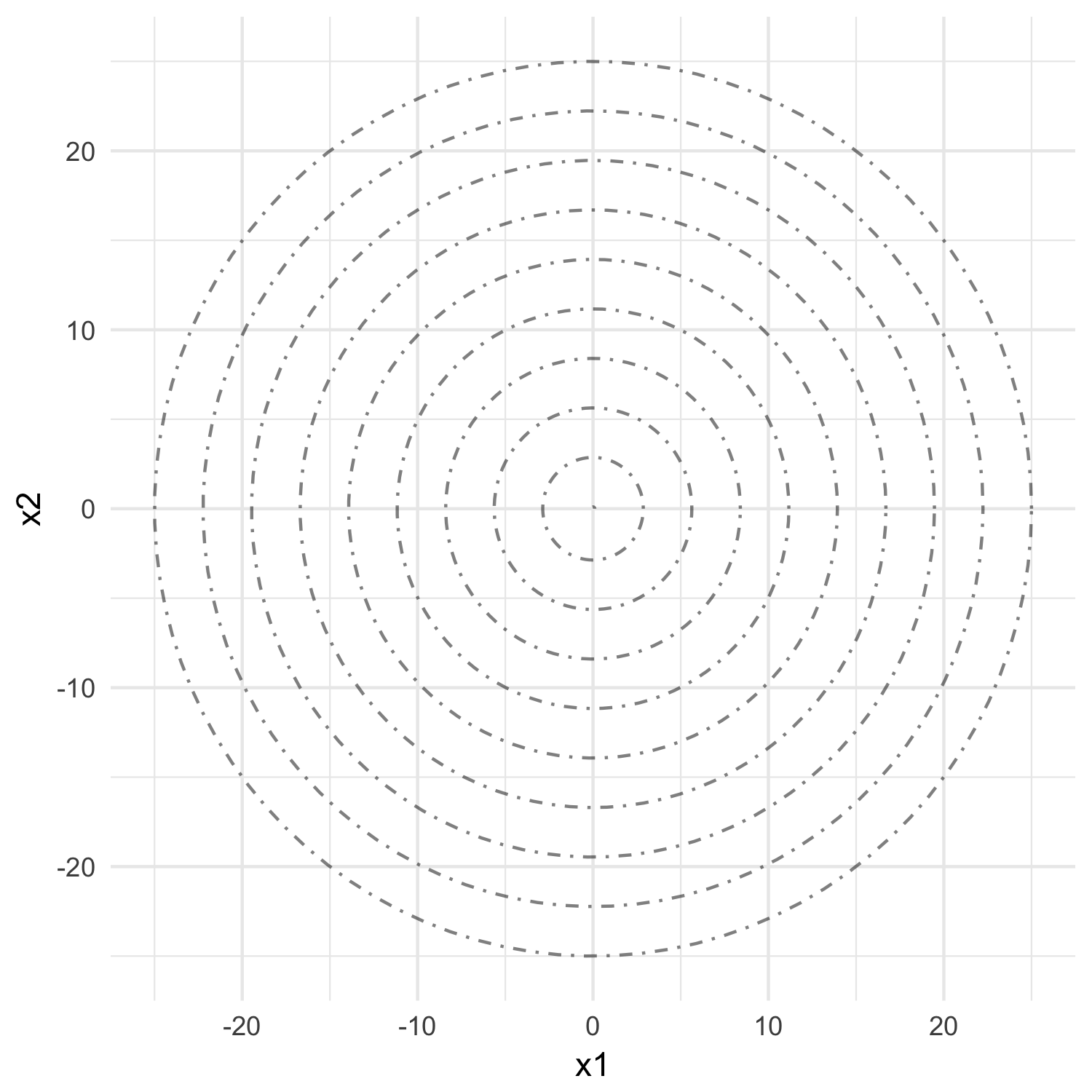}
        \caption{}
	\label{subfig:}
     \end{subfigure}
     \quad
     \begin{subfigure}[b]{0.3\textwidth}
        \includegraphics[width=\textwidth]{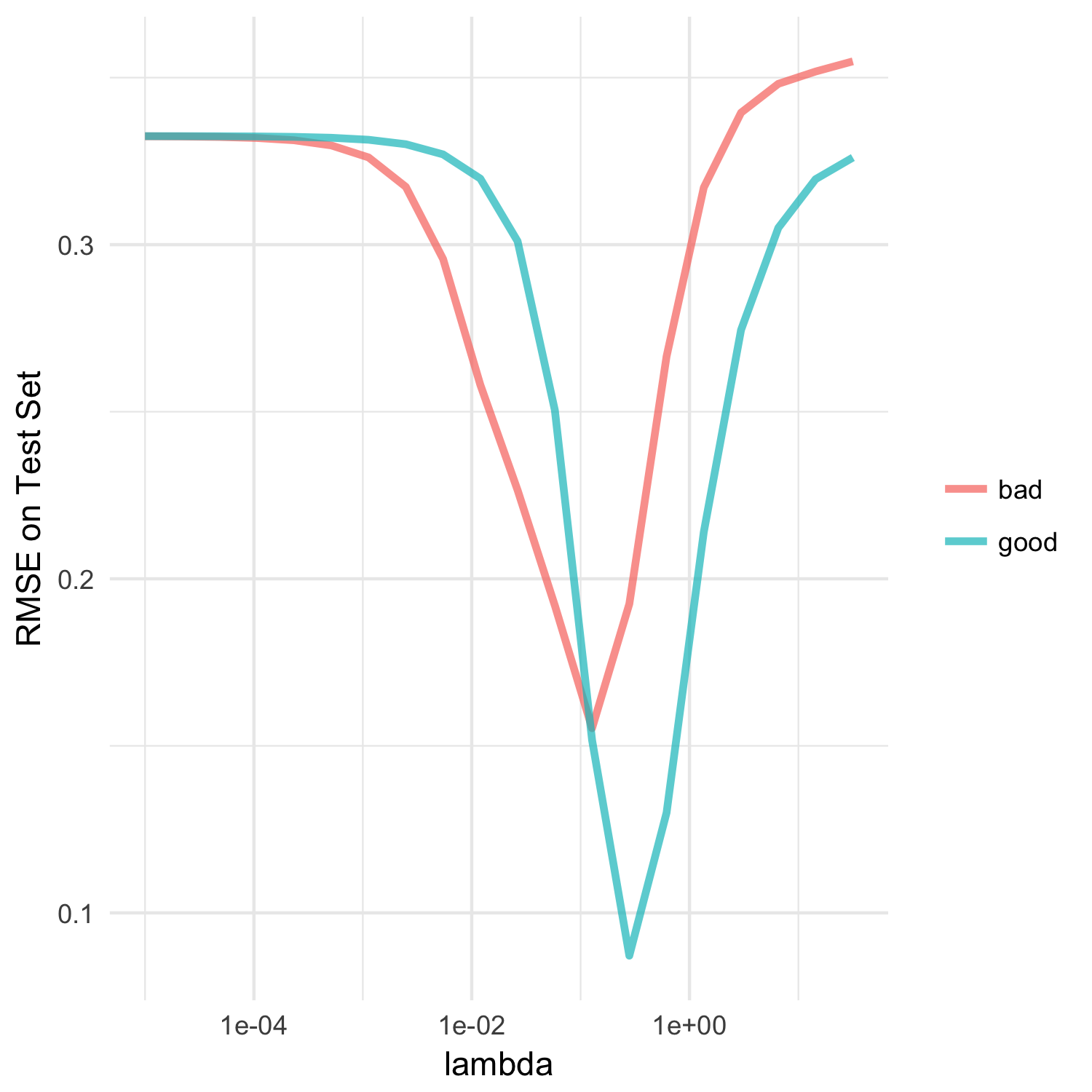}
        \caption{}
	\label{subfig:}
     \end{subfigure}
     \caption{The plot on the left shows the level sets for the conditional class probability function.  The plot on the right shows the out of sample root mean squared error for probability estimation for two different kernels.  The red line corresponds to a kernel with poor shape relative to the underlying probability density function, while the blue line shows a well-suited kernel.}
          \label{fig:kernel_example}
\end{figure}

We will close this section with a final point about kernel shape and sparsity.  Suppose we observe $n$ training points $(x_i, y_i)$, where $x \in [0,1]^p$ and $y \in \{0,1\}$ with $\condpr{y=1}{x}$ a function only of the first coordinate.  Our goal will be to estimate the probability that $y=1$ at some point $x_0$. In order to do this, we will consider counting the fraction of points in neighborhoods of $x_0$ that have volume $0 < s < 1$, but different shapes - for simplicity we will consider hyperrectangles with different side lengths.  If we consider a square, the side lengths of a neighborhood that capture a fraction $s$ of the training data will have expected side length $s^{1/p}$ in the first coordinate.  Now, consider a rectangle which has side lengths in the ratio of $\lambda : 1 : \cdots : 1$.  The expected side length in the first coordinate to capture a fraction $s$ of training points is now $\lambda \left( \frac{s}{\lambda} \right)^{1/p}$.  Thus, the ratio of requires lengths in the signal direction between the square and rectangle is $\lambda^{1/p - 1}$.  When $\lambda < 1$, that is, when the rectangle is more narrow in the signal direction, the square requires about $1/\lambda$ times more data points to estimate the probability with the same precision as the rectangle.

\section{Empirical Properties of the Proximity Function}
\label{sec:empirical_kernel}

We will now present some simulated examples to demonstrate some of the qualitative findings from the previous section.  In particular, we will show empirically that the random forest proximity matrix adapts to level sets of conditional class probability functions, and sparsity.  These findings should resonate with the intuition for probability estimation presented in Section~\ref{subsec:laplace}.

\subsection{Adaptation to CCPF Shape}
\label{subsec:adaptation_shape}

We will begin by illustrating that the proximity function's shape reflects the local geometry of the conditional class probability function.  While not implied directly by Proposition~\ref{prop:kernel}, one could easily modify the Proposition's assumptions to reflect preferential splitting among the strong variables.  This is future work, but we find it to be enlightening to present the current material to the extent that it reflects our discussion in Section~\ref{subsec:laplace}, and has direct implications for the quality of probability estimation.  Furthermore, in the next three examples we include a \textit{completely random forest} as a straw man for which to compare the actual random forest algorithm.  The \textit{completely random forest} is nonadaptive: it operates exactly as a random forest, except the predictor to split on and split value are chosen uniformly at random.  The proximity function derived from this algorithm is most similar to the one suggested in Breiman's analysis mentioned in Section~\ref{subsec:naive_model}.

Our first example consists of a piecewise logistic model, where the form of the class probability is given by Equation~\ref{eq:kinked}.  On the domain $x_1 < 0$, the probability model places more weight on the first coordinate, while on the domain $x_2 \geq 0$ the model places more weight on the second coordinate.  The structural break at $x_0$ will allow us the opportunity to investigate the extent to which the random forest locally adapts to changes in the shape of the probability function.  We draw $n=2,000$ points on the square $[-1,1]^2$, and we draw a class label $y \in \{0,1\}$ according to Equation~\ref{eq:kinked}.

\begin{equation}
\label{eq:kinked}
\condpr{y=1}{x} = 
\begin{cases}
 \frac{1}{1 + \exp \left( -3x_1 - x_2 \right)} &\mbox{if } x_1 < 0 \\
\frac{1}{1 + \exp \left( -x_1 - 3x_2 \right)} &\mbox{if } x_1 \geq 0
\end{cases}
\end{equation}

In Figure~\ref{fig:kinked} we plot level sets for the random forest and completely random forest proximity functions at different points.  In particular, the top set of figures show level sets for the random forest, and the bottom set of figures show level sets for the completely random forest.  The left set of figures shows level sets centered at $x_0 = (-0.25, -0.25)$, and the right set of figures shows level sets centered at $x_0 = (0.25, 0.25)$.  Let us first turn our attention to Figure~\ref{subfig:rf_ll}.  Since the first coordinate of the target point $x_0 = (-0.25, -0.25)$ is negative, the probability model places more weight on the first coordinate.  Correspondingly, we see that the level set for the random forest kernel is most narrow in the direction of the first coordinate.  Symmetrically, we see that the kernel is most narrow in the second coordinate in Figure~\ref{subfig:rf_ur}, in which the kernel is centered at $x_0 = (0.25, 0.25)$.  When parsing these findings, it is good to keep in mind the circle example from Section~\ref{subsec:laplace} in which we demonstrated the importance of the shape of the kernel.

One should compare the shape of the random forest kernel in each case with that of the \textit{completely random forest}, which is plotted in the bottom set of figures in Figure~\ref{fig:kinked}.  At both test points, the level sets are symmetric, regardless of the shape of the class probability function.  Of course, this is not surprising: the \textit{completely random forest} is nonadaptive by design.  Finally, averaged over 50 replications, the average difference in root mean square error between the \textit{completely random forest} and the random forest (using the kernel approach) at the point $x_0 = (-0.25, -0.25)$ is 0.002, while at $x_0 = (0.25, 0.25)$ is 0.007.

\begin{figure}[htp]
\centering
    \begin{subfigure}[b]{0.3\textwidth}
        \includegraphics[width=\textwidth]{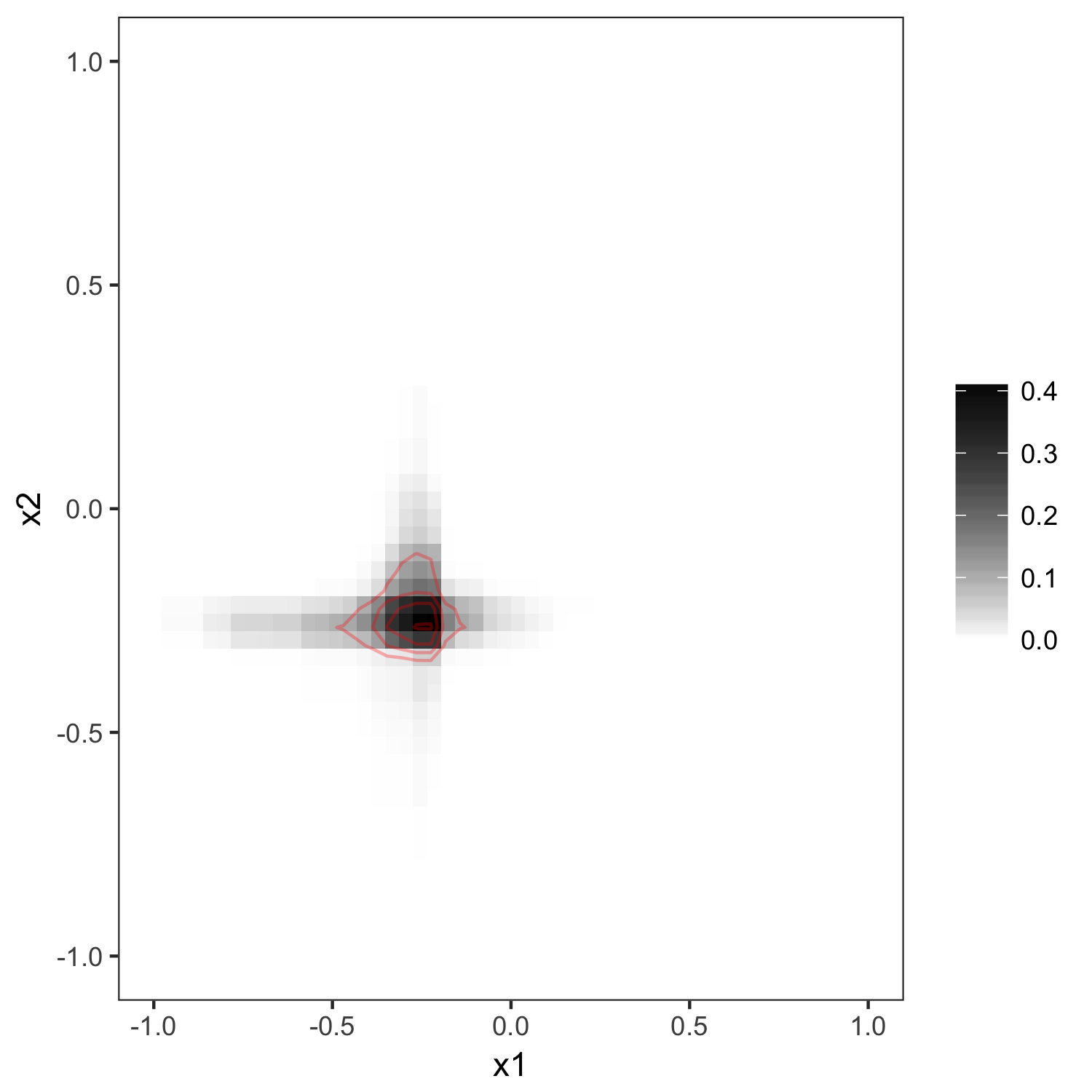}
        \caption{}
  \label{subfig:rf_ll}
     \end{subfigure}
     \quad
          \begin{subfigure}[b]{0.3\textwidth}
        \includegraphics[width=\textwidth]{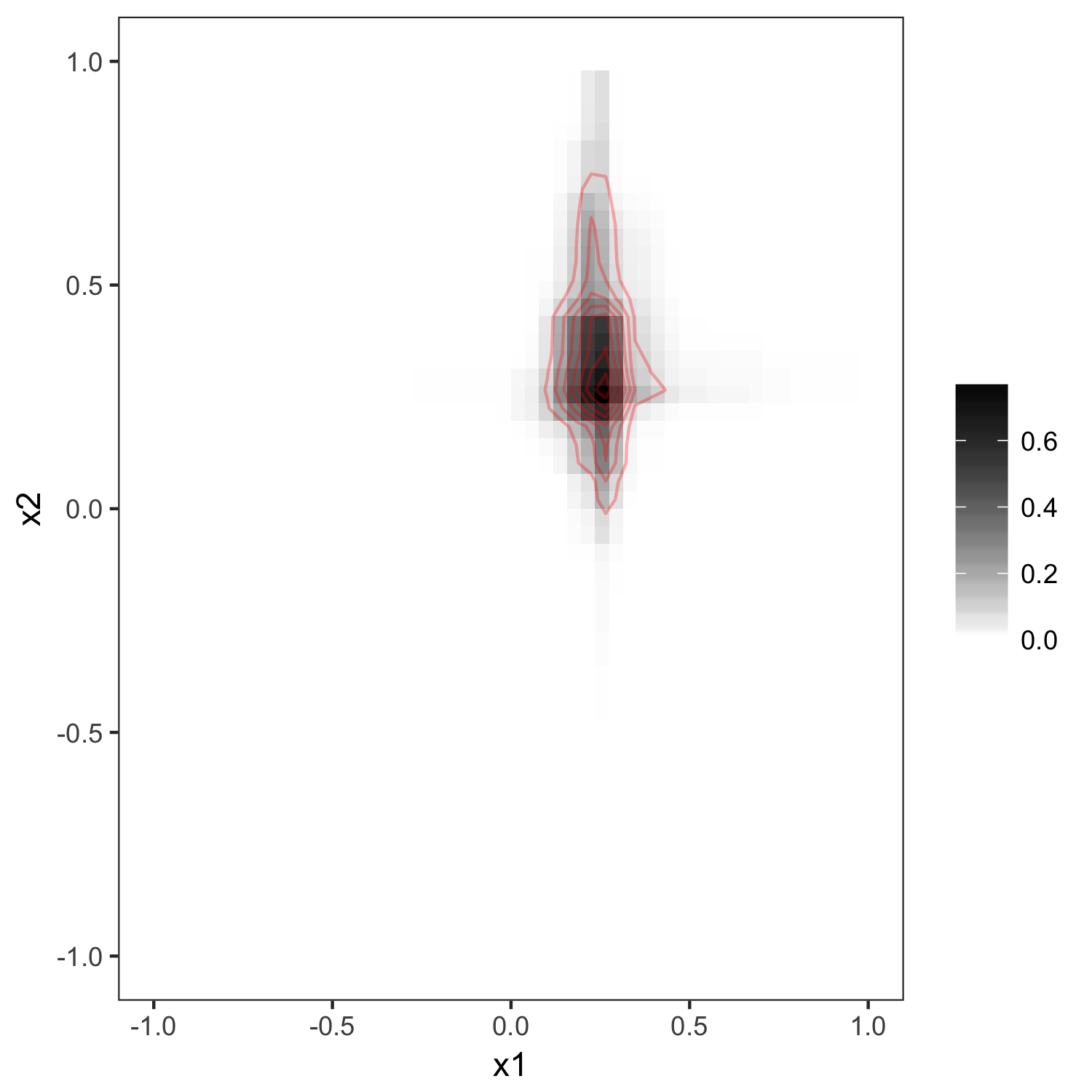}
        \caption{}
  \label{subfig:rf_ur}
     \end{subfigure} \\
         \begin{subfigure}[b]{0.3\textwidth}
        \includegraphics[width=\textwidth]{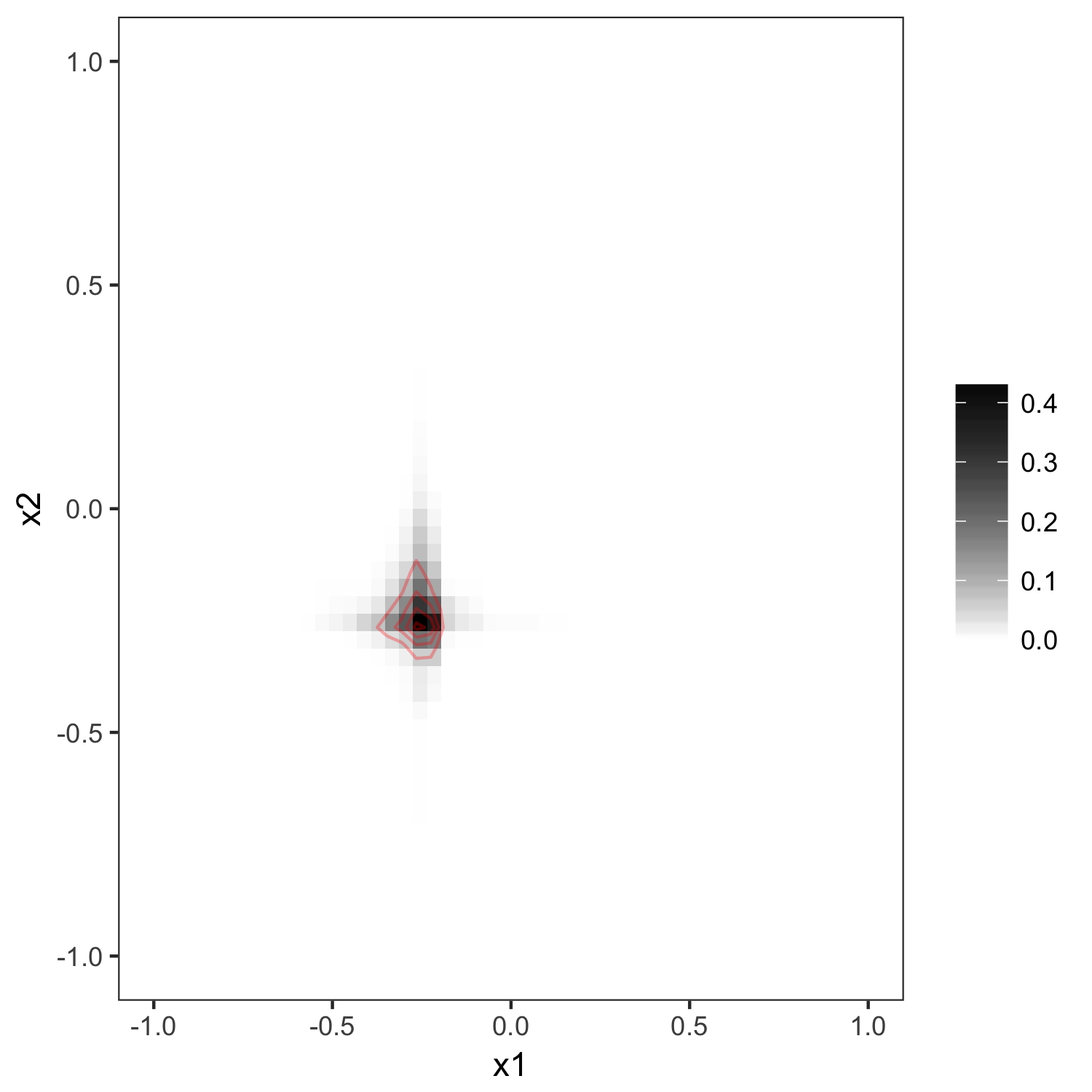}
        \caption{}
  \label{subfig:crf_ll}
     \end{subfigure}
     \quad
     \begin{subfigure}[b]{0.3\textwidth}
        \includegraphics[width=\textwidth]{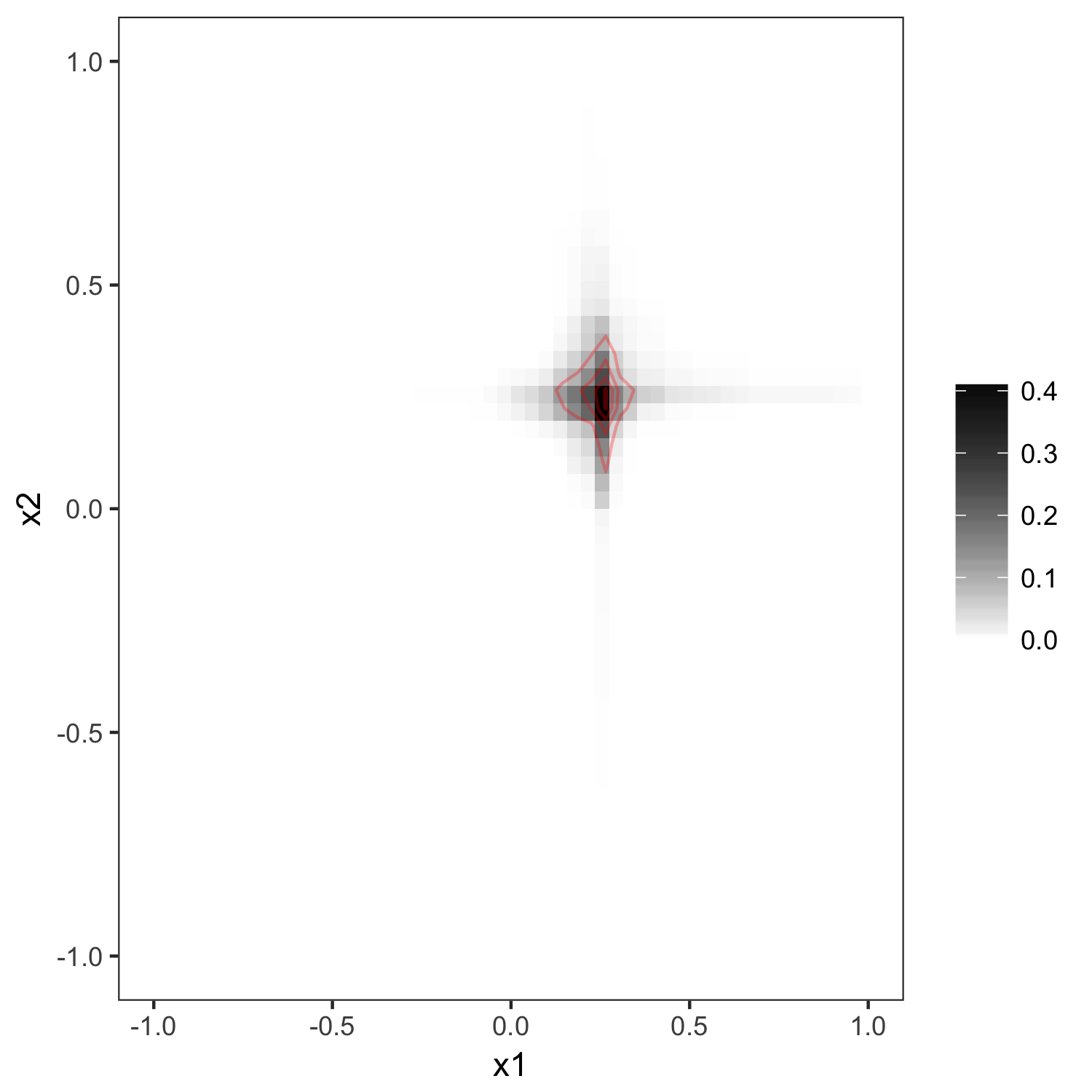}
        \caption{}
  \label{subfig:crd_ur}
     \end{subfigure}\\
     \caption{Level sets for the estimated probability density at a point $x_0$ for both random forests (sub-figures (a) and (b)) and completely random forests (sub-figures (c) and (d)).  Note that in the left set of figures, the target point is $x_0 = (-0.25, -0.25)$, while in the right set of figures $x_0 = (0.25, 0.25)$.  It is clear that that the level sets produced by the completely random forests are much more symmetric than the usual random forest.}
          \label{fig:kinked}
\end{figure}

\subsection{Adaptation to Sparsity}
\label{subsec:adaptation_sparsity}

In our naive kernel model from Section~\ref{subsec:naive_model} we argued that a random forest attached greater weights to signal variables than noise variables, and it is through this mechanism that it adapts to sparsity.  A random forest's ability to adapt to sparsity has been argued in other places in the literature, as well as from a good track record of prediction in high dimensional problems \citep{biau2012}, \citep{scornet2015}, \citep{diaz2006}.  We would like to illustrate through two examples that this adaptivity is reflected in the shape of the proximity function, which is narrow in directions of signal and flat in directions of noise.

Our first example is a simple logistic regression model, which we choose for visual ease.  We draw $n=500$ predictors $\x$ uniformly at random from the square $[-1,1]^2$, and then we draw a label $y \in \{0,1\}$ according to the probability
\begin{equation*}
\condpr{y=1}{x} = \frac{1}{1 + \exp\{ -3x_1\}}.
\end{equation*}
Note that only the first coordinate of the predictor $x$ matters: the second coordinate is a noise variable.  In this setting, when considering neighborhoods of a point $x_0$ to use for probability estimation, the best neighborhoods consist rectangles that are much more thin in the $x_1$ direction than the $x_2$ direction.  In Figure~\ref{fig:logistic2d}, we plot the level curves for the proximity function centered at $x_0 = (0,0)$ using a random forest and a \textit{completely random forest} in the left and right plots, respectively.  The random forest proximity function has the shape we would expect at the origin: a thin vertical strip.  On the other hand, the \textit{completely random forest} has a neighborhood that is symmetric in both the $x_1$ and $x_2$ directions.  In higher dimensions, one pays a price for this symmetry, as will be explored in more extensive simulation results in Section~\ref{sec:prob_comp}.

\begin{figure}[htp]
\centering
    \begin{subfigure}[b]{0.3\textwidth}
        \includegraphics[width=\textwidth]{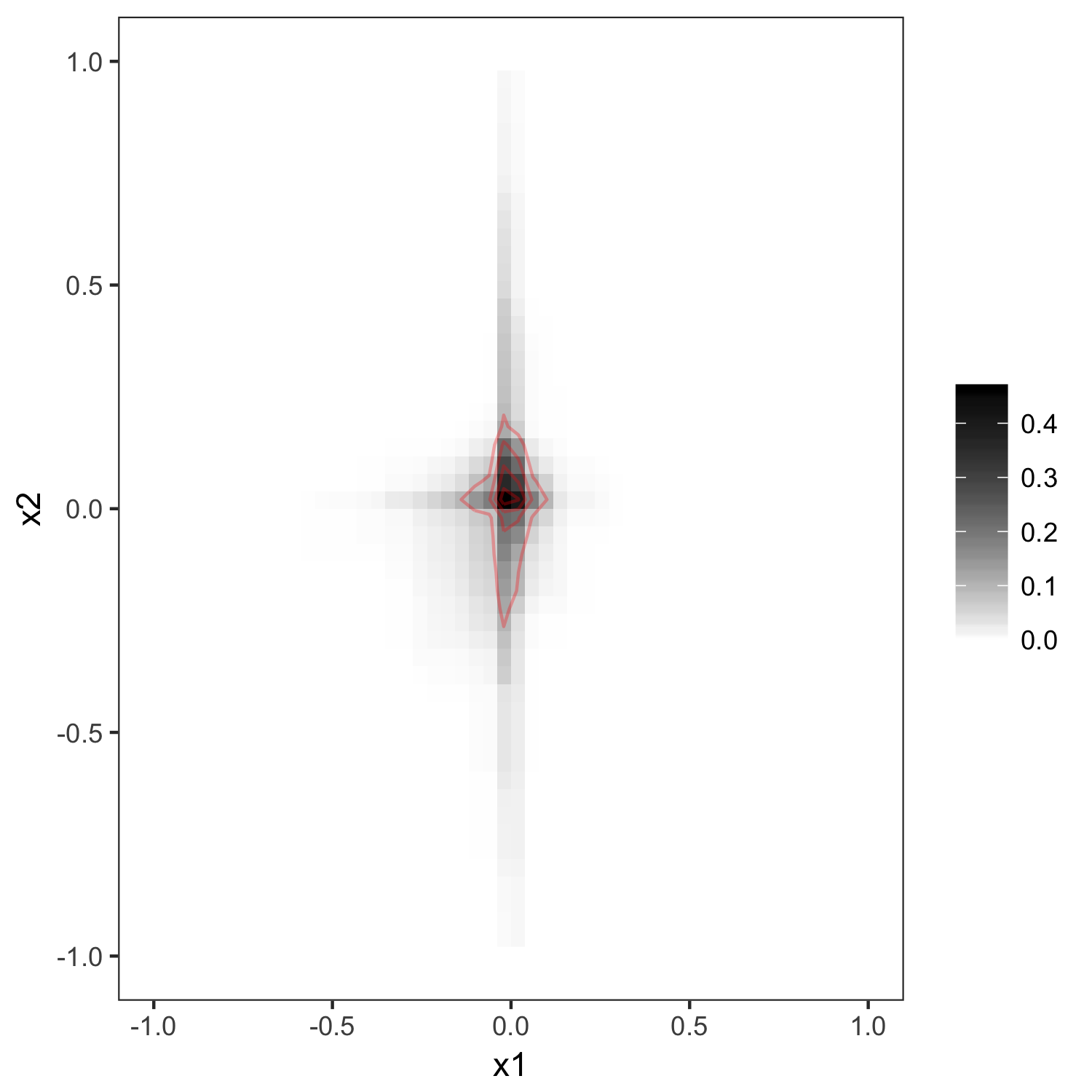}
        \caption{Random forest}
  \label{subfig:}
     \end{subfigure}
     \quad
     \begin{subfigure}[b]{0.3\textwidth}
        \includegraphics[width=\textwidth]{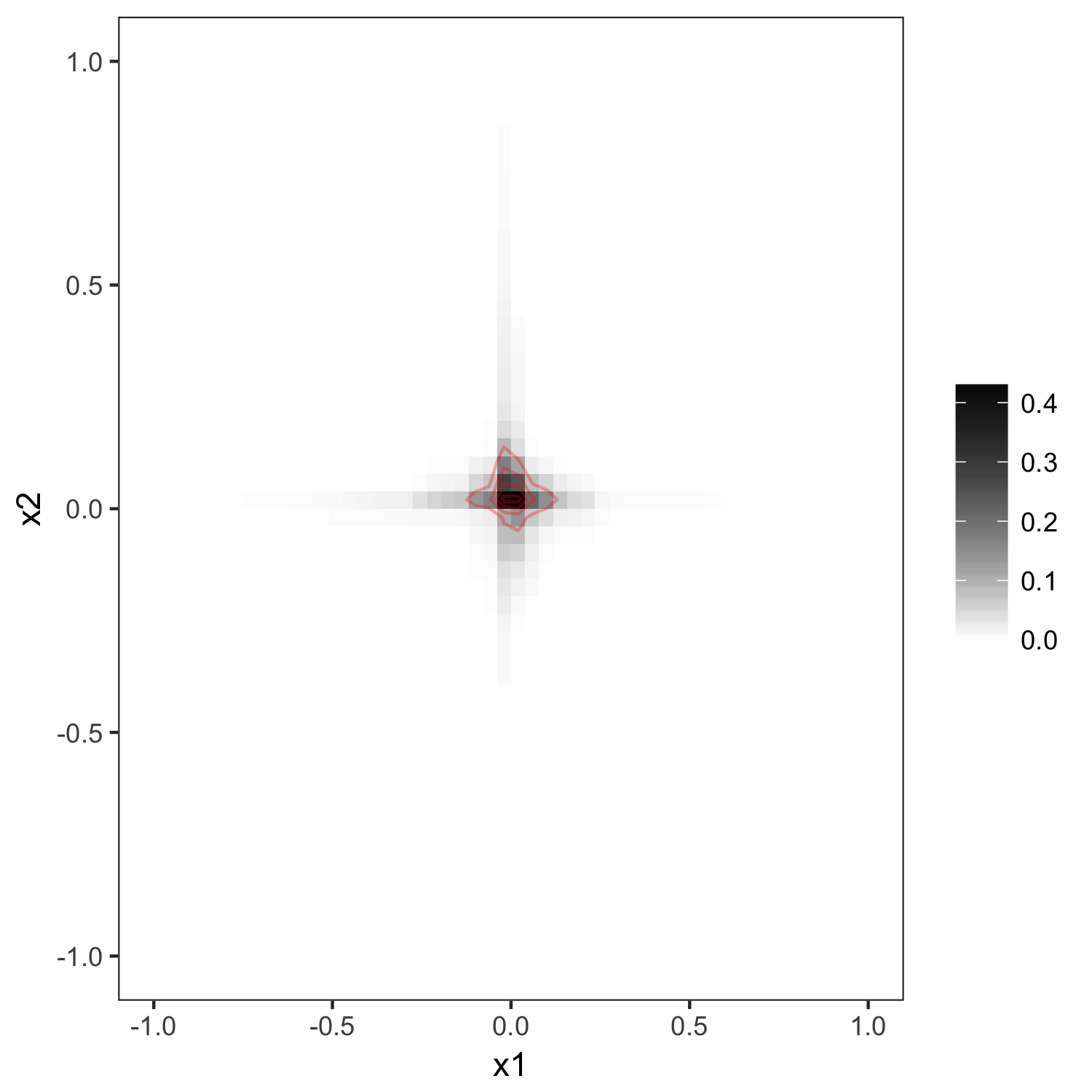}
        \caption{Completely random forest}
  \label{subfig:}
     \end{subfigure}
     \caption{Level sets for the estimated probability density at the point $x_0 = (0,0)$. The level set for the random forest stretches out in the signal dimension $x_1$, while the level set for the completely random forest is symmetric.}
          \label{fig:logistic2d}
\end{figure}

In our next example, we continue using a logistic model, but with a greater number of noise variables.  We draw $n=1,000$ predictors $\x$ uniformly at random from $[-1,1]^{22}$, and conditional on $\x$, we draw $y$ with probability 
\begin{equation*}
\condpr{y=1}{x} = \frac{1}{1 + \exp\{ -2x_1 - 2x_2\}}.
\end{equation*}
Notice that only the first two dimensions contain signal about the class label $y$.  The remaining 20 coordinates consist of noise variables.  Unlike the previous example, the level sets of the proximity function are difficult to visualize, so we instead investigate approximate directional derivatives of the proximity function centered at the origin
\begin{align*}
D_{j+} K(0,0) &\approx \frac{K(h e_j, 0) - K(0,0)}{h} \\
D_{j-} K(0,0) &\approx \frac{K(-h e_j, 0) - K(0,0)}{h} \\
\end{align*}
where we take $h=0.25$.  Figure~\ref{fig:derivative} plots the directional derivatives for each of the 22 directions in predictor domain.  The points in red are the values of the derivative for the random forest, and those in blue are the values for the \textit{completely random forest}.  Turning our attention first to the red points in Figure~\ref{subfig:derivative_pos}, the derivatives in the the $x_1$ and $x_2$ directions for the random forest kernel are $-1.9$ and $-2.85$, respectively, while the derivatives in the other dimensions are close to zero.  In other words, the kernel is very peaked in the signal dimensions, and very flat in the noise dimensions.  On the other hand, the directional derivatives for the \textit{completely random forest} kernel are all approximately the same, indicating a symmetric, flat shape.  The same phenomenon holds when analyzing left hand derivatives in Figure~\ref{subfig:derivative_neg}.  It further holds that as we increase the value of $\mtry$, the kernel becomes increasingly peaked in the directions of signal as predicted by our naive model.  Analyzing such directional derivatives might be an interesting alternative to variable importance type measures in a random forest.

\begin{figure}[htp]
\centering
    \begin{subfigure}[b]{0.3\textwidth}
        \includegraphics[width=\textwidth]{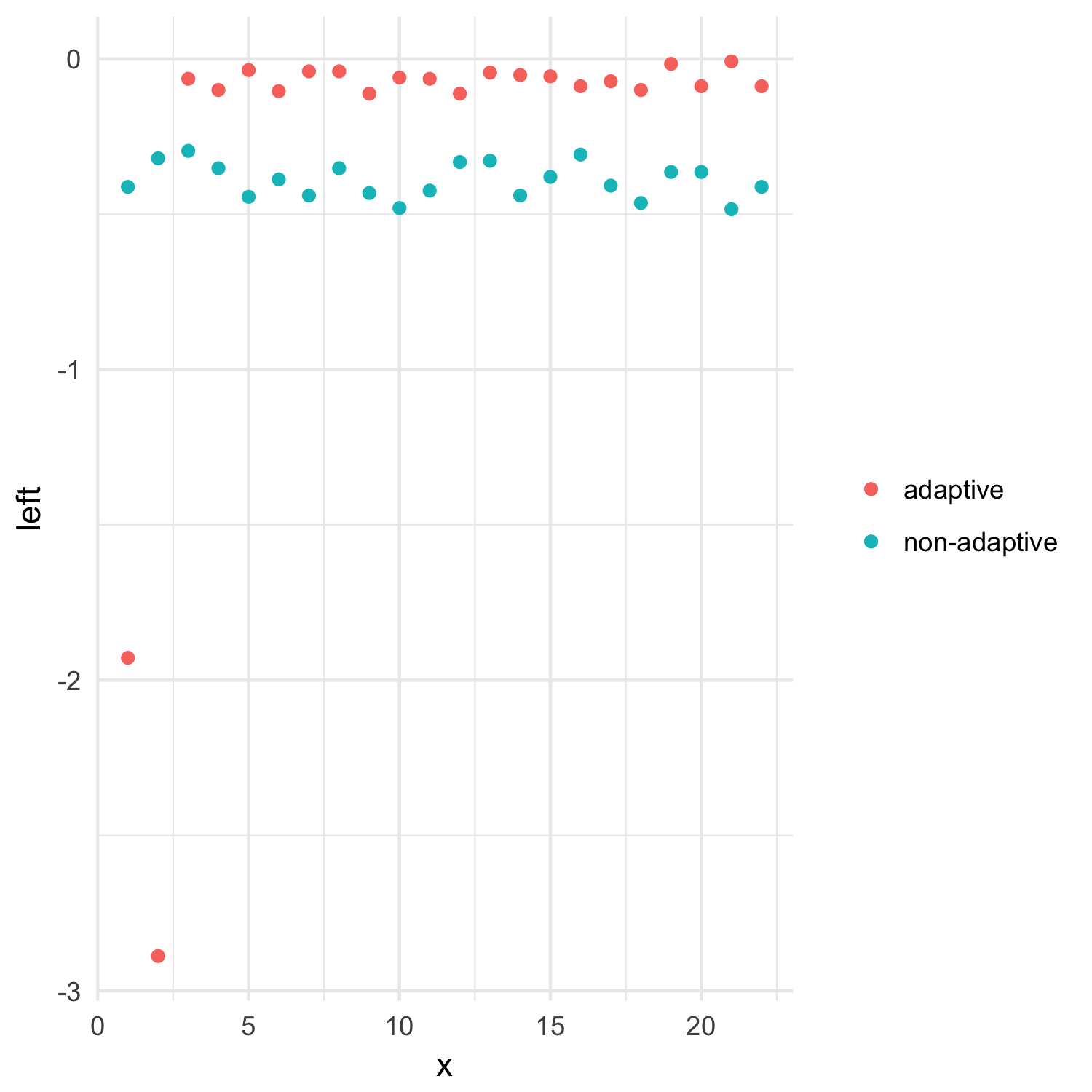}
        \caption{Left derivative}
  \label{subfig:derivative_pos}
     \end{subfigure}
     \quad
     \begin{subfigure}[b]{0.3\textwidth}
        \includegraphics[width=\textwidth]{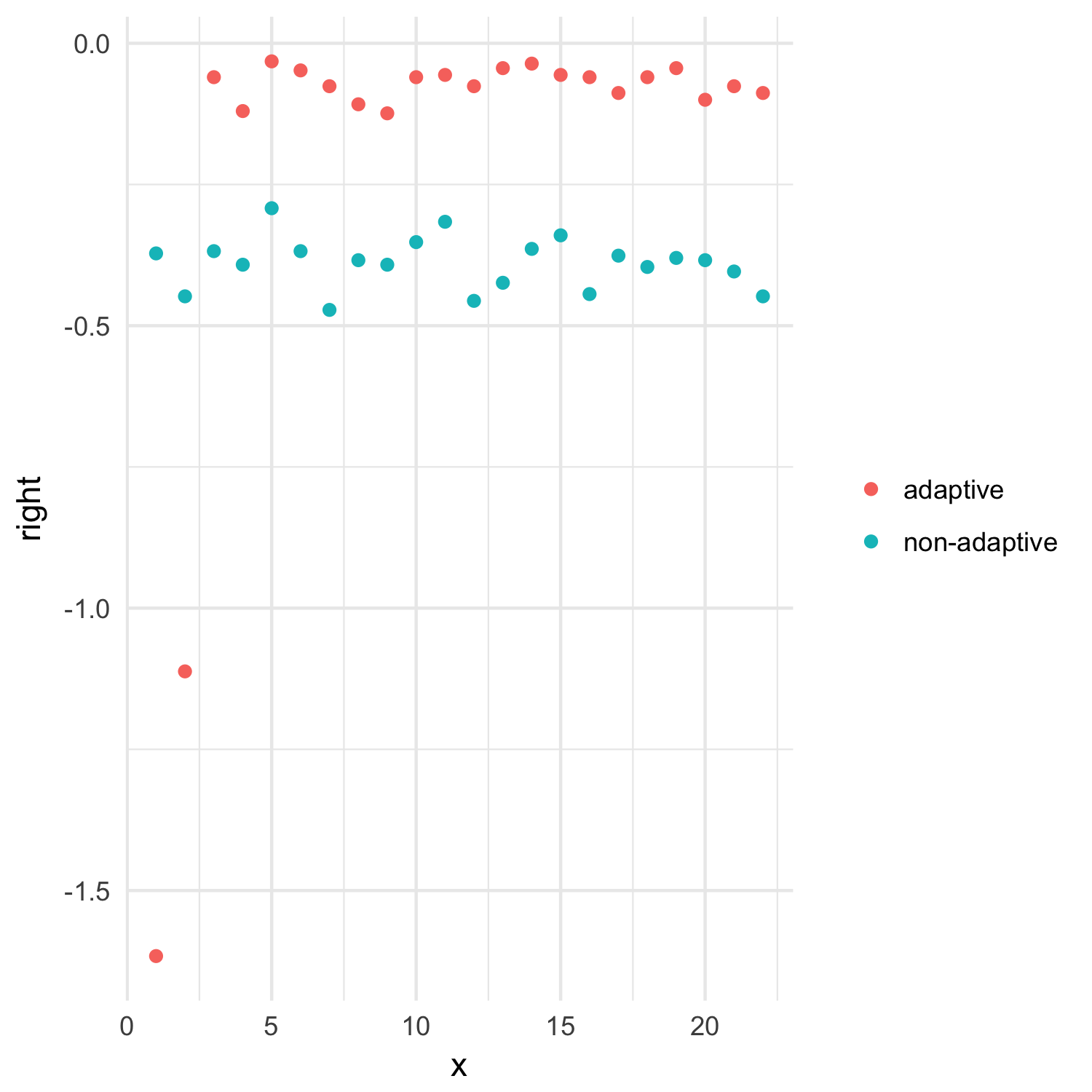}
        \caption{Right derivative}
  \label{subfig:derivative_neg}
     \end{subfigure}
     \caption{Plots of the left and right directional derivatives of the estimated probability density function produced by random forests (adaptive) and completely random forests (non-adaptive).}
          \label{fig:derivative}
\end{figure}

\subsection{Kernel Shape: the Spam Data Example}
\label{subsec:spam}

In Section~\ref{sec:kernel_intuition} we proposed a simple model of a random forest kernel that connected the shape of the kernel to the $\mtry$ parameter.  Specifically, we argued that as $\mtry$ increased, the kernel became more concentrated on signal variables, and more \textit{flat} in noise dimensions.  This intuition was also reflected on our motivating example in Section~\ref{subsec:motivation}.  Here, we will provide an illustration of this point on the \textit{spam} dataset.  

The \textit{spam} data set consists of $n=4601$ emails, along with $p=57$ predictors, with each predictor giving the frequency of certain words in that email.  Each example is attached with a label indicating whether than email is spam or not.  We are interested in determining the shape of the random forest kernel in the direction of noise variables.  Since we do not know these variables ahead of time in this data set, we add in 50 additional ``junk" predictors in the data that consist of randomly permuted predictors.  By permuting the predictors, we break any association with our newly constructed predictor and the response.

We fit a classification random forest for different values of $\mtry$, and we compute directional derivatives of the kernel as in Section~\ref{subsec:adaptation_sparsity}.  More precisely, we estimate directional derivatives at the point $x_0$, where the $p=107$ components of $x_0$ are the sample means for each predictor (including the ``junk" predictors).  The directional derivative estimates are given by
\begin{equation*}
D_{j+} K(x_0,x_0) \approx \frac{K(x_0 + h_j e_j, x_0) - K(x_0,x_0)}{h_j} \\
\end{equation*}
where $h_j$ is equal to $0.2$ times the standard deviation of the $j^{th}$ predictor.

Figure~\ref{fig:spam_deriv} plot the average absolute value of the directional derivative in the 50 ``junk" directions as a function of $\mtry$.  As predicted, the typical size of this derivative decreases as the value of $\mtry$ increases.  This is not surprising: larger values of $\mtry$ make it more likely that signal variables are used in tree splits, decreasing the influence of noise variables.  One can visualize this effect as a ``flattening" of the kernel in noise dimensions as $\mtry$ increases.
\begin{figure}[htp]
\centering
        \includegraphics[width=0.3\textwidth]{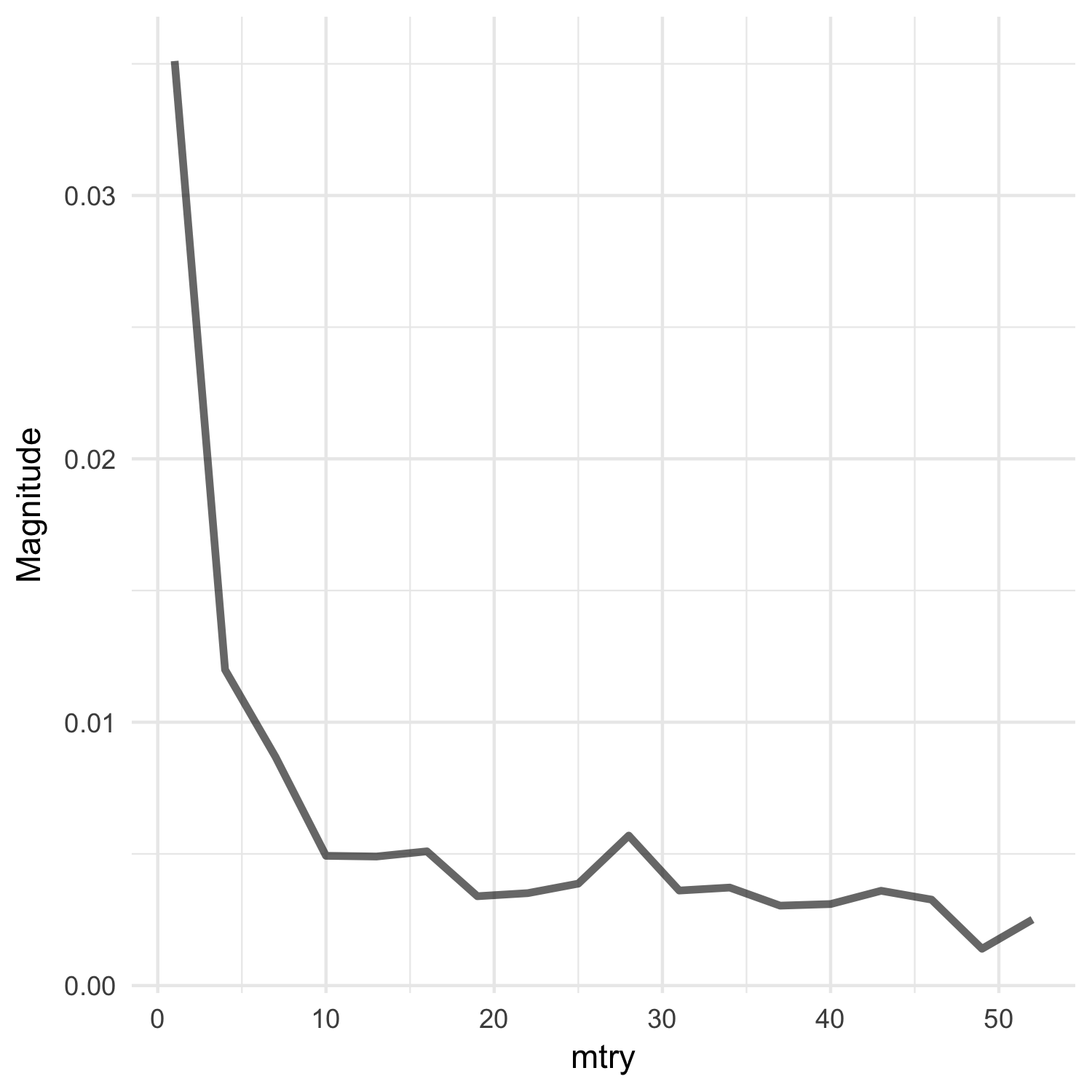}
        \caption{The average absolute value of the magnitude of directional derivative in the artificial noise variables in the \textit{spam} data set.}
  \label{fig:spam_deriv}
\end{figure}
We provide another illustration of the kernel shape in Figure~\ref{fig:spam}.  In each figure, the vertical axis relates the value of the directional derive for each coordinate $j = 1, \ldots, 107$, given by the horizontal axis.  Figure~\ref{subfig:spam_mtry10} shows this relationship for a random forest fit with $\mtry$ set to the classification forest default of $\sqrt{p} = 10$, while Figure~\ref{subfig:spam_mtry34} shows this relationship for the regression forest default of $p/3 = 34$.  In each figure, it is immediate to notice that the last 50 predictors have very small directional derivatives, given by black dots close to zero.  When $\mtry$ increases from 10 to 34, the values of all directional derivatives get shrunken to zero, with the exception of a few large values for predictors in the original data set.

\begin{figure}[htp]
\centering
    \begin{subfigure}[b]{0.3\textwidth}
        \includegraphics[width=\textwidth]{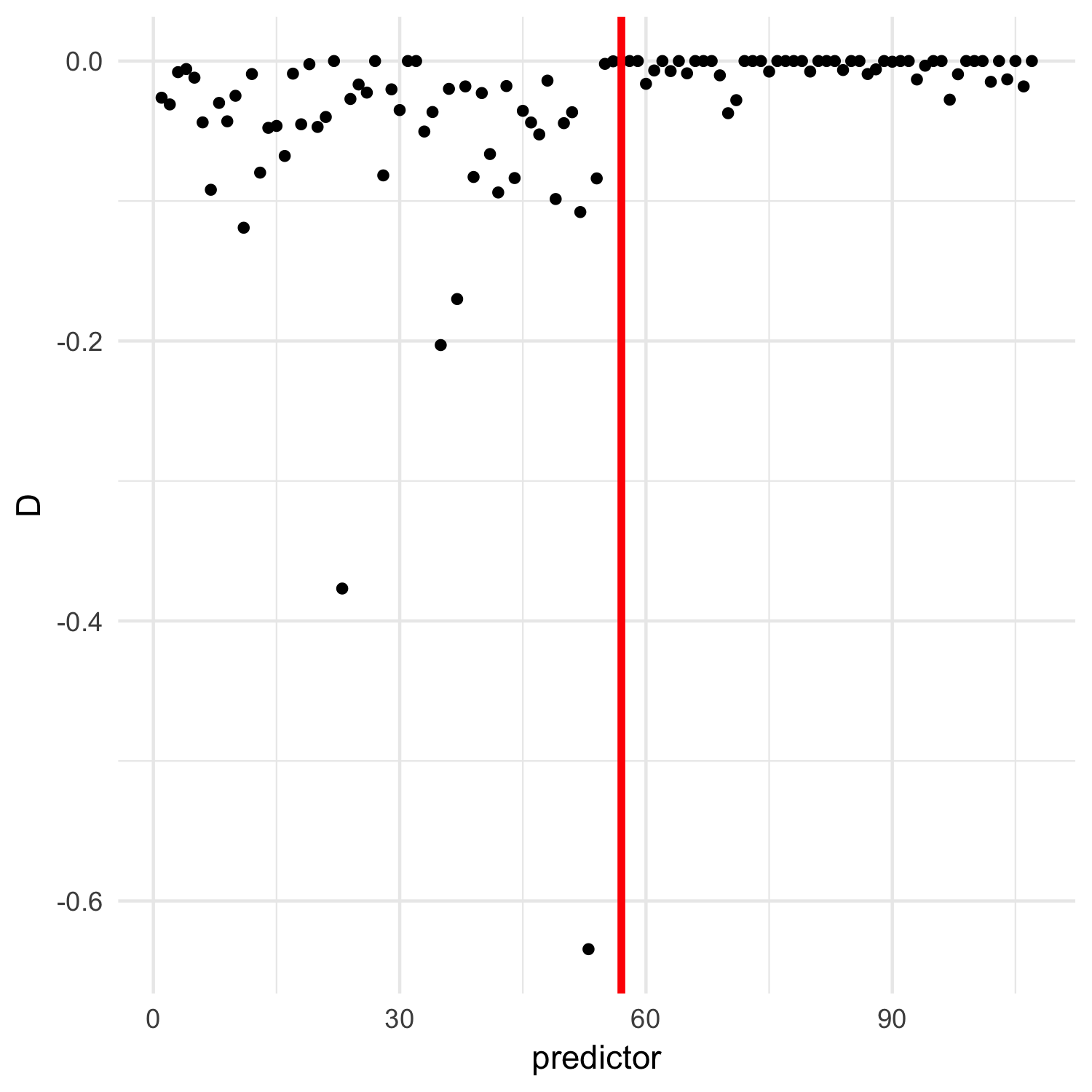}
        \caption{$\mtry = 10$}
  \label{subfig:spam_mtry10}
     \end{subfigure}
     \quad
     \begin{subfigure}[b]{0.3\textwidth}
        \includegraphics[width=\textwidth]{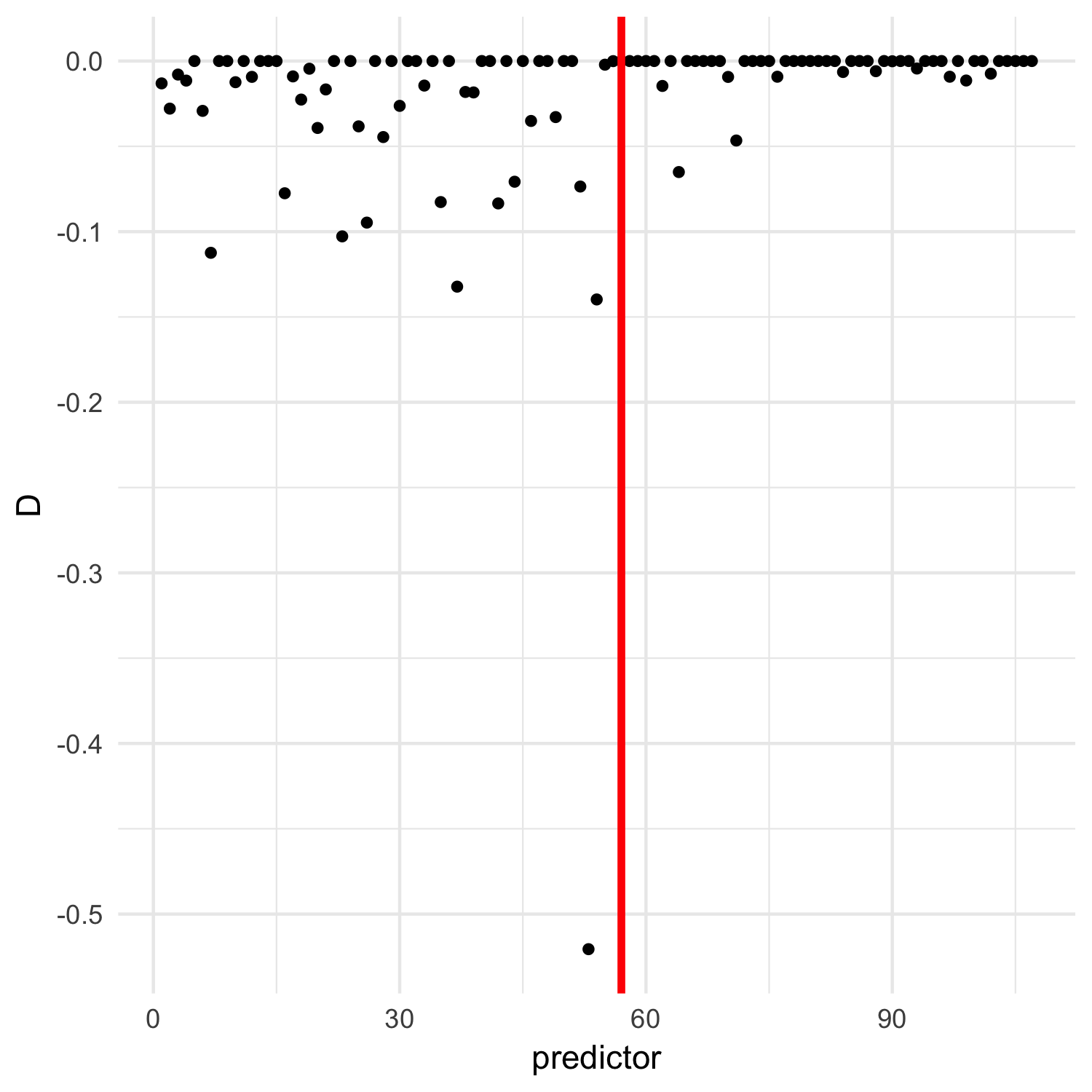}
        \caption{$\mtry = 34$}
  \label{subfig:spam_mtry34}
     \end{subfigure}
     \caption{Directional derivatives for each variable in the \textit{spam} data set.  Variables with indices to the right of the red line are artificially contructed noise variables.}
          \label{fig:spam}
\end{figure}

\section{Probability Comparisons}
\label{sec:prob_comp}

We will now undertake an empirical investigation to determine the extent to which random forest parameter tuning influences its probability estimates in a number of simulated and real data examples.  Specifically, we will be interested in studying the affect of $\mtry$ on probability estimates, as well as the type of forest used, i.e. regression, classification, or proximity.  In particular, we will consider regression and classification random forests with values at $\mtry$ set at both $p/3$ and $\sqrt{p}$, proximity random forests tuned for the best value of $\mtry$, completely random forests, and bagged CART trees.

\subsection{Real Data Sets}
\label{subsec:real_data}

We begin by considering probability estimation in six data sets taken from the UCI machine learning repository: \textit{spam}, \textit{splice}, \textit{tic}, \textit{parkinsons}, \textit{australian credit}, and \textit{ionosphere}.  A description of the original data is contained in Appendix~\ref{sec:data_description}.  Unlike the simulated examples presented earlier in the paper, we do not have the actual probabilities in our data sets, so we will evaluate our probability estimates according to an empirical root mean squared error:
\begin{equation*}
\sqrt{\sum^n_{i=1} \parens{\widehat{p}_i - y_i}^2}.
\end{equation*}
This quantity is computed for 50 random 80/20 splits of the data, and we report the average.  Furthermore, we add in 50 ``junk" predictors to each data-set in the manner described in Section~\ref{subsec:spam}.  We added in noisy predictors to investigate the efficacy of random forests in relatively sparse settings, as well as to consider a wider variety of $\mtry$ settings in data sets with small numbers of predictors.

\begin{table}[ht]
\centering
\begin{tabular}{cccccccc}
  \hline
 &  $RF^{class}$ & $RF^{reg}$ & $RF^{class}$ & $RF^{reg}$ & Bagged & Random & $RF^{prox}$ \\ 
 $\mtry$ & $p/3$ & $p/3$ &  $\sqrt{p}$ &  $\sqrt{p}$ & $p$ & One & Best \\
 \hline
\textit{spam} & 0.200 & 0.201 & 0.196 & 0.198 & 0.208 & 0.211 & 0.207 \\ 
  \textit{splice} & 0.119 & 0.120 & 0.159 & 0.161 & 0.122 & 0.241 & 0.124 \\ 
  \textit{tic} & 0.163 & 0.179 & 0.183 & 0.200 & 0.135 & 0.287 & 0.184 \\ 
  \textit{parkinsons} & 0.257 & 0.260 & 0.257 & 0.259 & 0.268 & 0.280 & 0.313 \\ 
  \textit{australian credit} & 0.310 & 0.308 & 0.312 & 0.312 & 0.315 & 0.328 & 0.308 \\ 
  \textit{ionosphere} & 0.227 & 0.230 & 0.225 & 0.227 & 0.245 & 0.287 & 0.262 \\ 
   \hline
\end{tabular}
\caption{Table of root mean squared errors for probability estimates for each of 6 data sets from the UCI repository.  Values of RMSE were computed as an average over 50 random training and testing splits for each dataset.  The probability estimation methods considered are random forests in regression and classification modes with different settings of $mtry$, bagged trees, completely random forests, and proximity random forests.} 
\label{table:real_data}
\end{table}

The first set of results are reported in Table~\ref{table:real_data}.  Here, we record the root mean square error for the seven methods previously described, evaluated on six data sets.  At a high level, no method uniformly dominates any other.  However, the completely random forest performs the worst in each experiment.  This is unsurprising, especially in light of our discussion of the shape of the completely random forest kernel described earlier in the paper.  Each predictor and each split point are chosen uniformly at random, so the forest does not adapt to the shape of the data.  Bagged trees perform the best on one data set, and $RF_{prox}$ performs the best one one data set.  The reader may recall that our random forest model predicted that the bagged trees would have a kernel tht was the most narrow in the signal dimensions since it exhaustively searched over predictors at each split.  This is not neccesarily at odds with our results: bagged trees are also much more shallow as a result of the better splits, which results in a smaller effective bandwidth.

One of the aims of this paper was to argue that the tuning parameter $\mtry$ matters much more for succesful probability estimation than the type of the forest.  There have been claims in the literature that regression random forests are preferred for probability estimation because of their interpretation as `conditional expectation machines' \citep{malley2012}, \citep{kruppa2014}.  However, these experiments ignore the fact that regression and classification forests have different default settings of $\mtry$ in common software, and we argue this is a crucial confounding factor.  For each experiment, we conducted a paired t-test comparing the root mean square error for regression and classification forests for the two different levels of $\mtry$ used as defaults in existing software: $p/3$ and $\sqrt{3}$.  In all but the \textit{tic} dataset, we found the root mean square errors produced by classification and regression forests to be indistinguishable.

We also considered a more in-depth analysis of the \textit{splice} data set.  Figure~\ref{fig:splice} displays the misclassification error (test error) and root mean square error as a function of $\mtry$.  It is clear from Figure~\ref{subfig:splice_test} that test error is relatively immune to the value of $\mtry$: for values larger than 4, test error is roughly the same.  The story is quite different for RMSE, as shown in Figure~\ref{subfig:splice_rmse}.  Here, the fall in RMSE is much more gradual as a function of $\mtry$, plateauing near a value of 30.  It is of interest to note that if one were to optimize the forest for test error and used $\mtry = 4$, the corresponding value of RMSE would be around 0.22, which is far away from the best value of 0.1.  This example reiterates our point that tuning random forest parameters can matter substantially depending upon the quantities one wishes to estimate.  In general, test error is much less sensitive to tuning parameters than probability estimation error.

\begin{figure}[htp]
\centering
    \begin{subfigure}[b]{0.3\textwidth}
        \includegraphics[width=\textwidth]{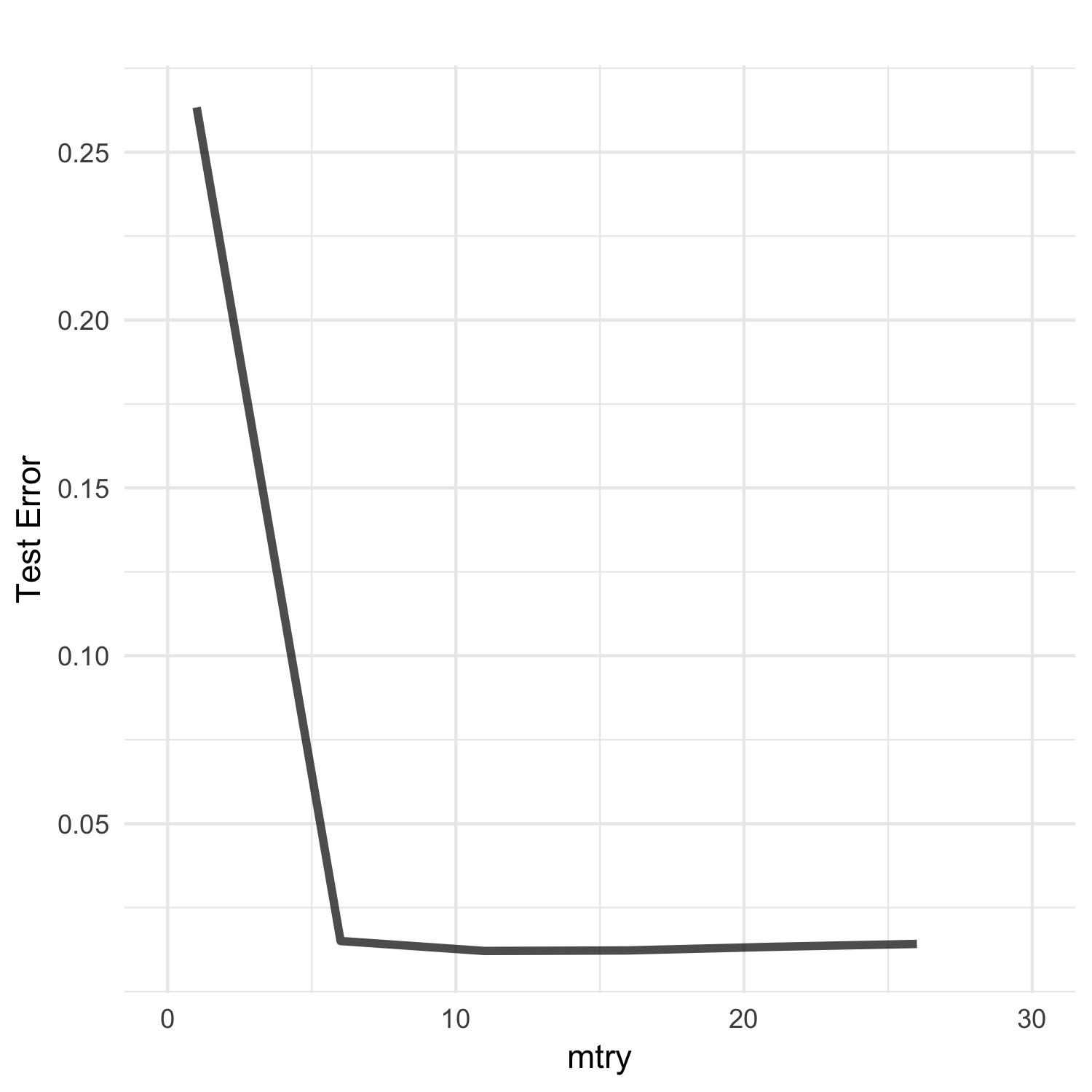}
        \caption{}
  \label{subfig:splice_test}
     \end{subfigure}
     \quad
     \begin{subfigure}[b]{0.3\textwidth}
        \includegraphics[width=\textwidth]{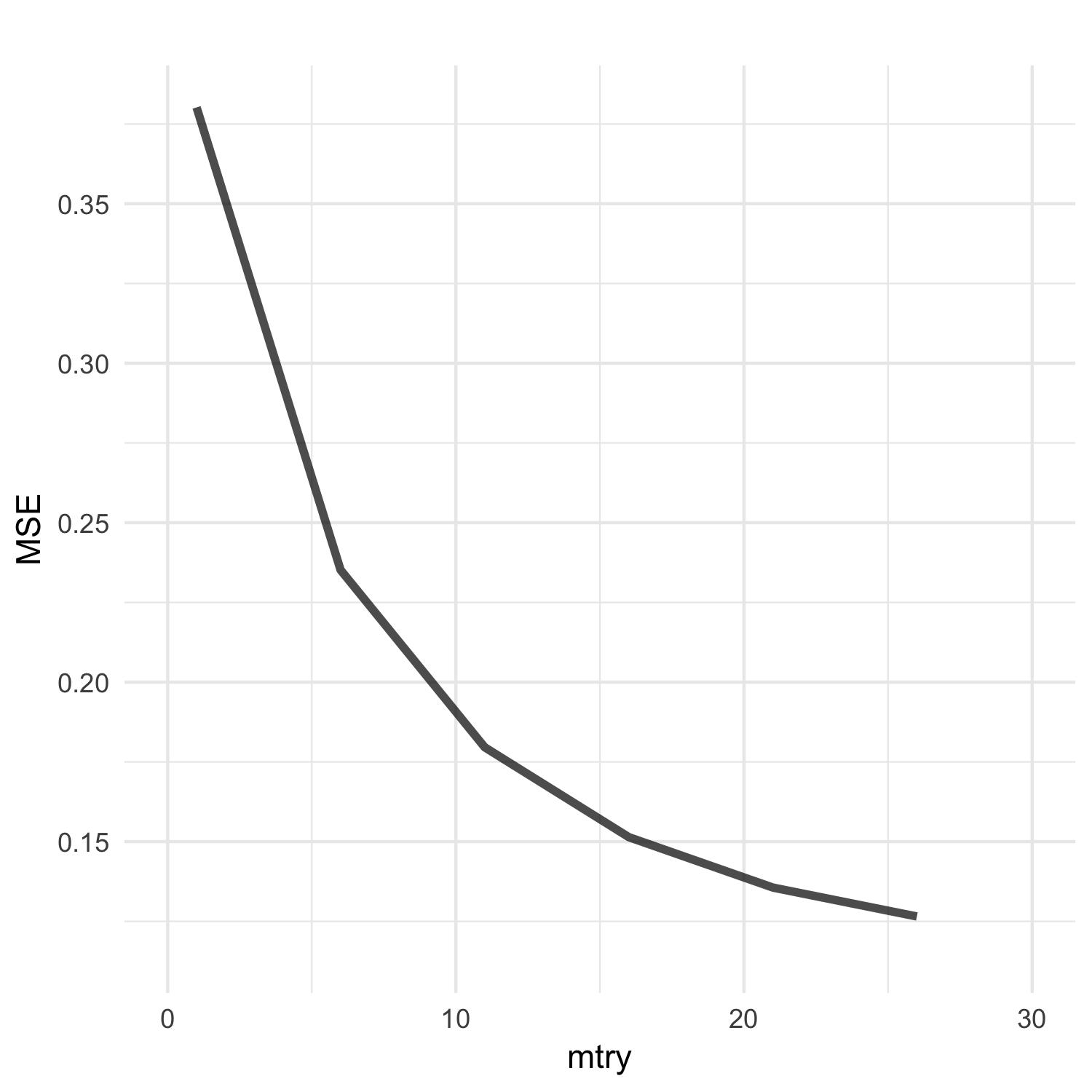}
        \caption{}
  \label{subfig:splice_rmse}
     \end{subfigure}
     \caption{Test error and root mean square probability estimation error for the \textit{splice} data as a function of $\mtry$.  Test error is generally much less sensitive to this parameter value.}
          \label{fig:splice}
\end{figure}

\subsection{Synthetic Data}
\label{subsec:synthetic_data}

In the previous experiments, we did not know the actual probability in each case, so we needed to use a surrogate measure for the quality of our probability estimates.  Here, we will analyze six simulated data sets in order to compare the different methods of obtaining probability estimates.  In each instance we will draw a training set of size $n=500$, fit seven different models, and evaluate each model on a test set of size $n=1000$.  This process is repeated 50 times, and we consider the average root mean square error over these repetitions.

The first three models in our experiments have appeared elsewhere in this paper: the \textit{circle model} was mentioned in Section~\ref{subsec:laplace} and a simple model with signal in only one dimension was used as a motivating example in Section~\ref{subsec:motivation}.  We also consider a simple logistic regression model in three dimensional space, as well as a more complicated logistic model that contains multiplicative interactions, the \textit{10d Model}.  Our last model generated data according to the \textit{XOR} function in two dimensions.  Please see Appendix~\ref{sec:data_description} for mathematical details.  This set of models captures a range of different statistical phenomena, ranging from simple probability density surfaces in low dimensions to more complicated nonlinear response surfaces.

The results are reported in Table~\ref{table:synthetic_data}.  The $RF^{prox}$ method has much better performance on the synthetic data sets, achieving the lowest average root mean squared error on four of six examples.  In some settings, such as the \textit{XOR} or \textit{1d Model}, it suffers only half as much RMSE as its nearest competitor.  Bagged trees and the completely random forest fail to achieve the best performance on any of the six data sets considered here.  The difference in error between classification and regression random forests for fixed values of $\mtry$ are larger than in the previous section, but one method does not dominate the other.  The best type of forest and tuning parameter $\mtry$ is heavily problem dependent.

\begin{table}[ht]
\centering
\begin{tabular}{cccccccc}
  \hline
 &  $RF^{class}$ & $RF^{reg}$ & $RF^{class}$ & $RF^{reg}$ & Bagged & Random & $RF^{prox}$ \\ 
 $\mtry$ & $p/3$ & $p/3$ &  $\sqrt{p}$ &  $\sqrt{p}$ & $p$ & One & Best \\
 \hline
\textit{Circle Model} & 0.161 & 0.149 & 0.179 & 0.168 & 0.179 & 0.150 & 0.137 \\ 
  \textit{1d Model} & 0.207 & 0.189 & 0.223 & 0.208 & 0.223 & 0.181 & 0.097 \\ 
  \textit{1d Model Sparse} & 0.121 & 0.120 & 0.138 & 0.137 & 0.132 & 0.181 & 0.126 \\ 
  \textit{10d Model} & 0.370 & 0.371 & 0.370 & 0.371 & 0.369 & 0.383 & 0.396 \\ 
  \textit{Logistic Model} & 0.162 & 0.151 & 0.177 & 0.168 & 0.184 & 0.136 & 0.115 \\ 
  \textit{XOR} & 0.215 & 0.197 & 0.229 & 0.213 & 0.229 & 0.193 & 0.136 \\ 
   \hline
\end{tabular}
\caption{Table of root mean squared errors for probability estimates for each of 6 simulated models .  Values of RMSE were computed as an average over 50 random training and testing splits for each dataset.  The probability estimation methods considered are random forests in regression and classification modes with different settings of $mtry$, bagged trees, completely random forests, and proximity random forests.} 
\label{table:synthetic_data}
\end{table}

\section{Conclusion}
\label{sec:conclusion}

In the statistics literature, the way in which one accomplishes probability density estimation in a nonparametric setting is through the use of (Parzen) kernels.  The canonical reference for this methodology is the \cite{parzen1962}, `On Estimation of a Probability Density Function and Mode.'  Random forests have also been found to estimate probabilities well in some settings, but they do so in an ostensibly different way, by averaging the votes of trees.  The connection that this paper makes is to frame random forest probability estimation in a familiar statistical setting.

As discussed in the Parzen paper, the kernel and its associated bandwidth parameter are crucial quantities in determine the quality of density estimate.  In a random forest, it is not obvious what these two quantities are, or how one might tune them.  Our paper extracts the kernel from the forest, and identifies the relevant tuning parameter.  We establish a model of the kernel which gives the user intuition for how changing this parameter affects the kernel's shape.  As with any kernel procedure in practice, the bandwidth parameter requires tuning: different sizes work better in different settings, completely dependent upon the problem at hand.

The practical implications of this paper are that for random forest probability estimation, tuning matters.  This point is easy to overlook because random forests misclassification error rate tends to be very robust to parameter settings.  In practice, it is tempting to trust that a random forest that produces a low misclassification error rate will also produce reasonable probabilities, but the examples in this paper have illustrated that this need not be the case.  

Finally, the `kernel' view of random forest also suggests extensions to the way random forests are used for classification and probability estimation.  As argued in Section~\ref{sec:prox_probs}, one can view random forests as kernel regression with the proximity function.  The proximity function is just one possible measure of `closeness' that can be extracted from the forest.  As a simple example, instead of a simple binary measure of whether two training points occupy the same terminal node of a tree, one might instead take into consideration the number of tree splits separating these points.

\bibliographystyle{plainnat}\bibliography{refs}

%%%%%%%%%%%%%%%%%%%%%%%%%%%%%%%%%%%%%%%%%%%%%%%%%%
%                                                        Appendix
%%%%%%%%%%%%%%%%%%%%%%%%%%%%%%%%%%%%%%%%%%%%%%%%%%

\appendix

\section{Equivalence of Splitting Criteria}
\label{sec:splitting}

It is a trivial fact that the mean squared error and Gini splitting criteria are equivalent when used with a binary outcomes $y \in \{0,1\}$.  As a consequence, random classification and regression forests fir to binary data only differ in the default parameter settings and aggregation method across trees.  We will present a simple argument for this fact.

Suppose that a candidate split of some node results in left and right daughter nodes, denoted by $R_L$ and $R_R$, each of which contains $N_L$ and $N_R$ training points, respectively.  The fitness criteria for such a split according to mean squared error is
\begin{equation}
\label{eq:split_req}
\sum_{x \in R_L} \left( y_i - p_L \right)^2 + \sum_{x \in R_R} \left( y_i - p_R \right)^2 
\end{equation}
while the fitness for the Gini criteria is
\begin{equation}
\label{eq:split_class}
N_L p_L \left(1-p_L\right) + N_R p_R \left(1-p_R\right)
\end{equation}
where $p_L = \frac{1}{N_L}\sum_{x \in R_L} y_i$ and $p_R = \frac{1}{N_R}\sum_{x \in R_R} y_i$.  We can expand the squares in the mean squared error criteria in Equation~\ref{eq:split_req} as follows:
\begin{align*}
\sum_{x \in R_L} \left( y_i - p_L \right)^2 + \sum_{x \in R_R} \left( y_i - p_R \right)^2  & = N_L p_L - 2 N_L p_L^2 + N_L p_L^2 + N_R p_R - 2 N_R p_R^2 + N_R p_R^2 \\
&= N_L p_L - N_L p_L^2 + N_L p_L - N_L p_L^2 \\
& = N_L p_L \left(1-p_L\right) + N_R p_R \left(1-p_R\right).
\end{align*}

\subsection{Naive Approximation}
\label{sec:approximation}

Suppose a random forest is grown to a size of $M$ nodes.  Under the setting of Algorithm~\ref{algo:simple_rf_app}, we will derive an approximation to the proximity function $K(0, z)$, where $z \in [0,1]^p$.  Our argument will borrow substantially from \cite{breiman2000} and \cite{scornet2016c}.  Under completely random splitting (no notion of strong and weak variables), \cite{breiman2000} found an approximation to the proximity function of the form $K(0,x) = \exp\left( -\log M /p\sum^p_{i=1} x_i \right)$ (although it is worth noting that the argument presented there claimed to approximate $K(x,z)$ for all $x, z \in [0,1]$, but it contained an error).  
\begin{algorithm}[htp]
\caption{Simplified Random Forest Tree}
\begin{algorithmic}
\State 1. Specify the number of leafs $M$; initialize $\texttt{leafs} = \{\mathfrak{t}_{root}\}$.
\State 2. For $m = 1:M$:
\State \indent (a) Select a terminal node $\mathfrak{t} \in \texttt{leafs}$ uniformly at random.
\State\indent  (b) Split $\mathfrak{t}$ into daughter nodes $\mathfrak{t}_{L}$, $\mathfrak{t}_{R}$
\State\indent\indent (i) Choose $\mtry$ predictors at random $\mathcal{F} \subseteq \{1, \ldots, p\}$
\State\indent\indent (ii) Select split variable uniformly among the $S_{*}$ \\ \indent \indent \indent \hspace{5mm} signal variables in $\mathcal{F}$
\State\indent\indent (iii) Choose split point uniformly at random
\State \indent (c) Replace $\mathfrak{t}$ with $\mathfrak{t}_{L}$ and $\mathfrak{t}_{R}$ in \texttt{leafs}
\end{algorithmic}
\label{algo:simple_rf_app}
\end{algorithm}

Specifically, we will assert that 
\begin{equation}
K(0,z) \approx \exp{ \left\lbrace -\log{M}  \left(  p_{\mathcal{S}} \sum_{s \in \Strong} z_s +  p_{\mathcal{W}} \sum_{w \in \Weak} z_w  \right) \right\rbrace}
\end{equation}
where 
\begin{align*}
p_{\mathcal{S}} &=  \sum^{S \wedge \mtry}_{k=1} \frac{{S-1 \choose k-1} {W \choose \mtry - k+1} }{k {p \choose \mtry}} \\
p_{\mathcal{W}} &= \frac{1-S p_{\mathcal{S}}}{W}.
\end{align*}

First, we will calculate the probability $p_s$ that a given strong variable $x_{s}$ is selected at a given node when there are $S$ strong variables, $W$ weak variables, and $\mtry$ predictors are considered at a time.  Let $R$ denote the number of strong variables selected among the $\mtry$, and $Q$ denote an indicator for whether $x_{s}$ is selected .  We will compute $p_{\mathcal{S}}$ by conditioning on $R$:
\begin{align*}
p_{\mathcal{S}} &= \sum^{S \wedge \mtry}_{k=1}  p\left(Q=1, R=k\right) \\
&= \sum^{S \wedge \mtry}_{k=1} \frac{{S - 1\choose k-1} {W \choose \mtry - k+1} }{k {p \choose \mtry}}.
\end{align*}
It is then easy to see that the probability of selecting a given weak variable is just $\frac{1-Sp_s}{W}$.

Next, we will compute the probability that $0$ and $z$ are in the same terminal node given that there are $k_j$ total splits on coordinate $j = 1, \ldots, p$.  Let $c_1, \ldots, c_{k_j}$ denote the randomly chosen split points for coordinate $j$.  Then the probability that these split points do not separate $0$ and $x_j$, $p\left(c_1\not\in [0, x_j], c_2\not\in [0, x_j] \ldots, c_{k_j} \not\in [0, x_j]\right)$,  is
\begin{equation}
\label{eq:integral}
\int_{c_1\not\in [0, x_j]} \int_{c_2\not\in [0, x_j]} \cdots \int_{c_{k_j} \not\in [0, x_j]}  p(dc_{k_j} | c_{k_j-1}) \cdots p(dc_2 | c_{1}) p(dc_1).
\end{equation}

Now, given $c_{k-1}$, the distribution of $c_{k}$ is $c_{k} | c_{k-1} \sim \mathcal{U}[0, c_{k-1}]$.  Thus, it holds that
\begin{align*}
p\parens{c_k\not\in [0, x_j] | c_{k-1}} = 1 - x_j/c_{k-1}.
\end{align*}
  One can then prove inductively that the integral in ~\ref{eq:integral} reduces to $1- x_j \sum^{k_j - 1}_{i=0} \frac{\parens{-\log{x_j}}^{i}}{i!}$.  If we let $w_j = -\log{x_j}$, this is precisely the probability that a Poisson random variable $Z_j$ with parameter $w_j$ is greater than $k_j$.  Next, we need to consider $p\parens{k_1, \ldots, k_p}$, the joint probability of cutting $k_j$ times on predictor $x_j$ for $j = 1, \ldots, p$.  If we condition on the total number of cuts $K$, then $p\parens{k_1, \ldots, k_p | K}$ is multinomial with $K$ total trials and success probabilities $\parens{p_s, \ldots, p_s, p_w, \ldots, p_w}$.  Finally, the distribution of $K$ is the sum of $M-1$ Bernoulli random variables, the $m$ of which has success probability $1/m$.  Putting this all together, 
\begin{equation}
\label{eq:exact}
K(0, z) = \sum^{T}_{k=1} p(k)  \sum^{k}_{k_1 + \cdots k_p = k} p\parens{k_1, \ldots, k_p | K = k} \prod^p_{j=1} p\parens{Z_j \geq k_j}.
\end{equation}
We seek a more tractable approximation to the probability computed in \ref{eq:exact}.  Following \cite{breiman2000}, we appeal to a Poisson approximation.  Let us assume that $p > 5$, $T \leq \exp\parens{p/2}$ and $W$ and $S$ are such that $p_s$ and $w_s$ are both small.  If there are currently $K$ nodes, then the probability of selecting the node  that $0$ and $z$ are both in is $1/K$.  So, $k_j$ is achieved by $T-1$ binomial trials, such that the probability of each is $p_s$ is $x_j$ is strong, and $p_w$ if $x_j$ is weak.  Thus, we will assume that $k_1, \ldots, k_p$ are independent Poisson random variables with parameter 
\begin{align*}
\lambda_s &= p_s \sum^{T-1}_{k=1} \frac{1}{k} \approx p_s \log{T} \\
\lambda_w &= p_w \sum^{T-1}_{k=1} \frac{1}{k} \approx p_w \log{T} 
\end{align*}
depending upon whether the predictor is strong or weak.  We may then approximate \ref{eq:exact} by
\begin{equation}
\prod_{s \in \mathcal{S}} U\parens{\lambda_s, w_s} \prod_{w \in \mathcal{W}} U\parens{\lambda_w, w_w}
\end{equation}
where $U\parens{\lambda_s, w_s} = e^{-\lambda_s - w_s} \sum^{\infty}_{k=1} \lambda_s^k p\parens{Z_s \geq w_s}$, and analogously for $U\parens{\lambda_w, w_w}$.  Finally, through a Laplace transform argument, \cite{breiman2000} argued that  $U\parens{\lambda_s, w_s} \approx e^{-\lambda_s w_s}$.

\section{Data Set Descriptions}
\label{sec:data_description}

\begin{table}[ht]
\label{table:dset_table}
\centering
\begin{tabular}{cccc|}
\hline 
Data Set & N & Features \\ 
\hline 
\textit{Australian credit} & 690 & 15 \\  
\textit{ionosphere} & 351 & 34 \\  
\textit{parkinsons} & 195 & 22\\  
\textit{spam} & 4601 & 57 \\  
\textit{splice} & 2422 & 60\\  
\textit{tic} & 958 & 9 \\  
\textit{voting} & 435 & 16 \\ 
\hline 
\end{tabular} 
\caption{Descriptions of UCI Repository data sets used in Section~\ref{sec:prob_comp}.}
\end{table}

\begin{itemize}

\item 1d Model
\begin{equation*}
\condprobyx =
\begin{cases}
0.3     & x_1 x_2 \geq 0\\
0.7 & x_1 x_2 < 0
\end{cases}
\end{equation*}

\item Circle Model
\begin{equation*}
\condprobyx =
\begin{cases}
1 , & ||x||_2 \leq  8\\
\frac{28 - ||x||_2}{20} , & 8 \leq ||x||_2 \leq 28\\
0,  & \text{otherwise}.
\end{cases}
\end{equation*}

\item Logistic Model
\begin{equation*}
\condprobyx = \frac{1}{1 + e^{-2 (x_1 + x_2 + x_3)}}.
\end{equation*}

\item 10d Model
\begin{equation*}
\log\left(\frac{\condprobyx)}{1-\condprobyx}\right) = 0.5 (1-x_1 + x_2 - \cdots +x_6) (x_1 + \cdots + x_6)
\end{equation*}

\item XOR Model
\begin{equation*}
\condprobyx = 
\begin{cases}
0.3, & x_1 x_2 \geq 0\\
0.7,  & x_1 x_2 < 0.                                           
\end{cases}
\end{equation*}
\end{itemize}

\end{document}